\definecolor{cvprblue}{rgb}{0.21,0.49,0.74}
\crefname{section}{Sec.}{Secs.}
\Crefname{section}{Section}{Sections}
\Crefname{table}{Table}{Tables}
\crefname{table}{Tab.}{Tabs.}
\newcommand\lft{\mathopen{}\left}
\newcommand\rgt{\aftergroup\mathclose\aftergroup{\aftergroup}\right}
\newcommand{\topic}[2][0.5mm]{\vspace{#1}\noindent\textbf{#2}}
\begin{document}

\title{Mobile Video Diffusion}

\author{Haitam Ben Yahia\thanks{Equal contribution} \qquad
Denis Korzhenkov\footnotemark[1] \qquad
Ioannis Lelekas \\
Amir Ghodrati \qquad
Amirhossein Habibian \\ \\
{Qualcomm AI Research\thanks{Qualcomm AI Research is an initiative of Qualcomm Technologies, Inc. Snapdragon and Qualcomm branded products are products of Qualcomm Technologies, Inc. and/or its subsidiaries.}} \\
{\tt\small \{hyahia, dkorzhen, ilelekas, ghodrati, ahabibia\}@qti.qualcomm.com} %
}

\newcommand\ah[1]{\textcolor{green}{AmirH: #1}} %
\newcommand\denis[1]{\textcolor{blue}{Denis: #1}} %
\newcommand\haitam[1]{\textcolor{orange}{Haitam: #1}} %
\newcommand\amirg[1]{\textcolor{red}{AmirG: #1}} %
\newcommand\ioannis[1]{\textcolor{pink}{Ioannis: #1}} %

\newcommand{\unet}{\bm{\epsilon}}
\newcommand{\unetH}{\bm{\epsilon}_H}
\newcommand{\unetHin}{\bm{\epsilon}_H^{in}}
\newcommand{\unetHout}{\bm{\epsilon}_H^{out}}
\newcommand{\unetL}{\bm{\epsilon}_L}
\newcommand{\repin}[1]{\bm{r}_{#1}^{in}}
\newcommand{\repout}[1]{\bm{r}_{#1}^{out}}
\newcommand{\repoutpred}[1]{\hat{\bm{r}}_{#1}^{out}}
\newcommand{\adap}{\bm{\phi}_\theta}

\newcommand\methodname{MobileVD\xspace}
\newcommand\methodnamehires{MobileVD-HD\xspace}
\newcommand\sfv{SF-V\xspace}
\newcommand\jedi{JEDi\xspace}

\newcommand{\cin}{c_{\textrm{in}}}
\newcommand{\cout}{c_{\textrm{out}}}
\newcommand{\cinner}{c_{\textrm{inner}}}
\newcommand{\lhsisolated}{\hspace{2em}&\hspace{-2em}}  %

\newtheorem{theorem}{Theorem}[section]
\newtheorem{corollary}{Corollary}[theorem]
\newtheorem{lemma}[theorem]{Lemma}

\newcommand{\head}[1]{{\smallskip\noindent\textbf{#1}}}
\newcommand{\alert}[1]{{\color{red}{#1}}}
\newcommand{\sm}{\scriptsize}
\newcommand{\eq}[1]{(\ref{eq:#1})}

\newcommand{\Th}[1]{\textsc{#1}}
\newcommand{\mr}[2]{\multirow{#1}{*}{#2}}
\newcommand{\mc}[2]{\multicolumn{#1}{c}{#2}}
\newcommand{\tb}[1]{\textbf{#1}}
\newcommand{\ch}{\checkmark}

\newcommand{\red}[1]{{\color{red}{#1}}}
\newcommand{\blue}[1]{{\color{blue}{#1}}}
\newcommand{\green}[1]{\color{green}{#1}}

\newcommand{\citeme}[1]{\red{[XX]}}
\newcommand{\refme}[1]{\red{(XX)}}

\newcommand{\fig}[2][1]{\includegraphics[width=#1\linewidth]{fig/#2}}
\newcommand{\figh}[2][1]{\includegraphics[height=#1\linewidth]{fig/#2}}

\newcommand{\tran}{^\top}
\newcommand{\mtran}{^{-\top}}
\newcommand{\zcol}{\mathbf{0}}
\newcommand{\zrow}{\zcol\tran}

\newcommand{\ind}{\mathbbm{1}}
\newcommand{\expect}{\mathbb{E}}
\newcommand{\nat}{\mathbb{N}}
\newcommand{\zahl}{\mathbb{Z}}
\newcommand{\real}{\mathbb{R}}
\newcommand{\proj}{\mathbb{P}}
\newcommand{\prob}{\mathbf{Pr}}
\newcommand{\normal}{\mathcal{N}}

\newcommand{\mif}{\textrm{if}\ }
\newcommand{\other}{\textrm{otherwise}}
\newcommand{\minimize}{\textrm{minimize}\ }
\newcommand{\maximize}{\textrm{maximize}\ }
\newcommand{\st}{\textrm{subject\ to}\ }

\newcommand{\id}{\operatorname{id}}
\newcommand{\const}{\operatorname{const}}
\newcommand{\sgn}{\operatorname{sgn}}
\newcommand{\var}{\operatorname{Var}}
\newcommand{\mean}{\operatorname{mean}}
\newcommand{\trace}{\operatorname{tr}}
\newcommand{\diag}{\operatorname{diag}}
\newcommand{\vect}{\operatorname{vec}}
\newcommand{\cov}{\operatorname{cov}}
\newcommand{\sign}{\operatorname{sign}}
\newcommand{\prj}{\operatorname{proj}}

\newcommand{\softmax}{\operatorname{softmax}}
\newcommand{\clip}{\operatorname{clip}}

\newcommand{\defn}{\mathrel{:=}}
\newcommand{\peq}{\mathrel{+\!=}}
\newcommand{\meq}{\mathrel{-\!=}}

\newcommand{\floor}[1]{\left\lfloor{#1}\right\rfloor}
\newcommand{\ceil}[1]{\left\lceil{#1}\right\rceil}
\newcommand{\inner}[1]{\left\langle{#1}\right\rangle}
\newcommand{\norm}[1]{\left\|{#1}\right\|}
\newcommand{\abs}[1]{\left|{#1}\right|}
\newcommand{\frob}[1]{\norm{#1}_F}
\newcommand{\card}[1]{\left|{#1}\right|\xspace}
\newcommand{\divg}[2]{{#1\ ||\ #2}}
\newcommand{\diff}{\mathrm{d}}
\newcommand{\der}[3][]{\frac{d^{#1}#2}{d#3^{#1}}}
\newcommand{\pder}[3][]{\frac{\partial^{#1}{#2}}{\partial{#3^{#1}}}}
\newcommand{\ipder}[3][]{\partial^{#1}{#2}/\partial{#3^{#1}}}
\newcommand{\dder}[3]{\frac{\partial^2{#1}}{\partial{#2}\partial{#3}}}

\newcommand{\wb}[1]{\overline{#1}}
\newcommand{\wt}[1]{\widetilde{#1}}

\def\xssp{\hspace{1pt}}
\def\ssp{\hspace{3pt}}
\def\msp{\hspace{5pt}}
\def\lsp{\hspace{12pt}}

\newcommand{\cA}{\mathcal{A}}
\newcommand{\cB}{\mathcal{B}}
\newcommand{\cC}{\mathcal{C}}
\newcommand{\cD}{\mathcal{D}}
\newcommand{\cE}{\mathcal{E}}
\newcommand{\cF}{\mathcal{F}}
\newcommand{\cG}{\mathcal{G}}
\newcommand{\cH}{\mathcal{H}}
\newcommand{\cI}{\mathcal{I}}
\newcommand{\cJ}{\mathcal{J}}
\newcommand{\cK}{\mathcal{K}}
\newcommand{\cL}{\mathcal{L}}
\newcommand{\cM}{\mathcal{M}}
\newcommand{\cN}{\mathcal{N}}
\newcommand{\cO}{\mathcal{O}}
\newcommand{\cP}{\mathcal{P}}
\newcommand{\cQ}{\mathcal{Q}}
\newcommand{\cR}{\mathcal{R}}
\newcommand{\cS}{\mathcal{S}}
\newcommand{\cT}{\mathcal{T}}
\newcommand{\cU}{\mathcal{U}}
\newcommand{\cV}{\mathcal{V}}
\newcommand{\cW}{\mathcal{W}}
\newcommand{\cX}{\mathcal{X}}
\newcommand{\cY}{\mathcal{Y}}
\newcommand{\cZ}{\mathcal{Z}}

\newcommand{\vA}{\mathbf{A}}
\newcommand{\vB}{\mathbf{B}}
\newcommand{\vC}{\mathbf{C}}
\newcommand{\vD}{\mathbf{D}}
\newcommand{\vE}{\mathbf{E}}
\newcommand{\vF}{\mathbf{F}}
\newcommand{\vG}{\mathbf{G}}
\newcommand{\vH}{\mathbf{H}}
\newcommand{\vI}{\mathbf{I}}
\newcommand{\vJ}{\mathbf{J}}
\newcommand{\vK}{\mathbf{K}}
\newcommand{\vL}{\mathbf{L}}
\newcommand{\vM}{\mathbf{M}}
\newcommand{\vN}{\mathbf{N}}
\newcommand{\vO}{\mathbf{O}}
\newcommand{\vP}{\mathbf{P}}
\newcommand{\vQ}{\mathbf{Q}}
\newcommand{\vR}{\mathbf{R}}
\newcommand{\vS}{\mathbf{S}}
\newcommand{\vT}{\mathbf{T}}
\newcommand{\vU}{\mathbf{U}}
\newcommand{\vV}{\mathbf{V}}
\newcommand{\vW}{\mathbf{W}}
\newcommand{\vX}{\mathbf{X}}
\newcommand{\vY}{\mathbf{Y}}
\newcommand{\vZ}{\mathbf{Z}}

\newcommand{\va}{\mathbf{a}}
\newcommand{\vb}{\mathbf{b}}
\newcommand{\vc}{\mathbf{c}}
\newcommand{\vd}{\mathbf{d}}
\newcommand{\ve}{\mathbf{e}}
\newcommand{\vf}{\mathbf{f}}
\newcommand{\vg}{\mathbf{g}}
\newcommand{\vh}{\mathbf{h}}
\newcommand{\vi}{\mathbf{i}}
\newcommand{\vj}{\mathbf{j}}
\newcommand{\vk}{\mathbf{k}}
\newcommand{\vl}{\mathbf{l}}
\newcommand{\vm}{\mathbf{m}}
\newcommand{\vn}{\mathbf{n}}
\newcommand{\vo}{\mathbf{o}}
\newcommand{\vp}{\mathbf{p}}
\newcommand{\vq}{\mathbf{q}}
\newcommand{\vr}{\mathbf{r}}
\newcommand{\Vs}{\mathbf{s}}
\newcommand{\vt}{\mathbf{t}}
\newcommand{\vu}{\mathbf{u}}
\newcommand{\vv}{\mathbf{v}}
\newcommand{\vw}{\mathbf{w}}
\newcommand{\vx}{\mathbf{x}}
\newcommand{\vy}{\mathbf{y}}
\newcommand{\vz}{\mathbf{z}}

\newcommand{\vone}{\mathbf{1}}
\newcommand{\vzero}{\mathbf{0}}

\newcommand{\valpha}{{\boldsymbol{\alpha}}}
\newcommand{\vbeta}{{\boldsymbol{\beta}}}
\newcommand{\vgamma}{{\boldsymbol{\gamma}}}
\newcommand{\vdelta}{{\boldsymbol{\delta}}}
\newcommand{\vepsilon}{{\boldsymbol{\epsilon}}}
\newcommand{\vzeta}{{\boldsymbol{\zeta}}}
\newcommand{\veta}{{\boldsymbol{\eta}}}
\newcommand{\vtheta}{{\boldsymbol{\theta}}}
\newcommand{\viota}{{\boldsymbol{\iota}}}
\newcommand{\vkappa}{{\boldsymbol{\kappa}}}
\newcommand{\vlambda}{{\boldsymbol{\lambda}}}
\newcommand{\vmu}{{\boldsymbol{\mu}}}
\newcommand{\vnu}{{\boldsymbol{\nu}}}
\newcommand{\vxi}{{\boldsymbol{\xi}}}
\newcommand{\vomikron}{{\boldsymbol{\omikron}}}
\newcommand{\vpi}{{\boldsymbol{\pi}}}
\newcommand{\vrho}{{\boldsymbol{\rho}}}
\newcommand{\vsigma}{{\boldsymbol{\sigma}}}
\newcommand{\vtau}{{\boldsymbol{\tau}}}
\newcommand{\vupsilon}{{\boldsymbol{\upsilon}}}
\newcommand{\vphi}{{\boldsymbol{\phi}}}
\newcommand{\vchi}{{\boldsymbol{\chi}}}
\newcommand{\vpsi}{{\boldsymbol{\psi}}}
\newcommand{\vomega}{{\boldsymbol{\omega}}}

\newcommand{\rLambda}{\mathrm{\Lambda}}
\newcommand{\rSigma}{\mathrm{\Sigma}}

\newcommand{\vLambda}{\bm{\rLambda}}
\newcommand{\vSigma}{\bm{\rSigma}}

\makeatletter
\newcommand{\vast}[1]{\bBigg@{#1}}
\makeatother

\makeatletter
\newcommand*\bdot{\mathpalette\bdot@{.7}}
\newcommand*\bdot@[2]{\mathbin{\vcenter{\hbox{\scalebox{#2}{$\m@th#1\bullet$}}}}}
\makeatother

\makeatletter
\DeclareRobustCommand\onedot{\futurelet\@let@token\@onedot}
\def\@onedot{\ifx\@let@token.\else.\null\fi\xspace}

\def\Wlog{\emph{W.l.o.g}\onedot}
\def\eg{\emph{e.g}\onedot} \def\Eg{\emph{E.g}\onedot}
\def\ie{\emph{i.e}\onedot} \def\Ie{\emph{I.e}\onedot}
\def\cf{\emph{cf}\onedot} \def\Cf{\emph{Cf}\onedot}
\def\etc{\emph{etc}\onedot} \def\vs{\emph{vs}\onedot}
\def\wrt{w.r.t\onedot} \def\dof{d.o.f\onedot} \def\aka{a.k.a\onedot}
\def\etal{\emph{et al}\onedot}
\makeatother

\maketitle
\begin{abstract}
\label{sec:abstract}
Video diffusion models have achieved impressive realism and controllability but are limited by high computational demands, restricting their use on mobile devices. This paper introduces the first mobile-optimized video diffusion model. Starting from a spatio-temporal UNet from Stable Video Diffusion (SVD), we reduce memory and computational cost by reducing the frame resolution, incorporating multi-scale temporal representations, and introducing two novel pruning schema to reduce the number of channels and temporal blocks. Furthermore, we employ adversarial finetuning to reduce the denoising to a single step. Our model, coined as \methodname, is $523\times$ more efficient ($1817.2$ vs. $4.34$ TFLOPs) with a slight quality drop (FVD $149$ vs. $171$), generating latents for a $14 \times 512 \times 256$ px clip in $1.7$ seconds on a Xiaomi-14 Pro.
Our results are available at \url{https://qualcomm-ai-research.github.io/mobile-video-diffusion/}
\end{abstract}

\section{Introduction}

Video diffusion models are making significant progress in terms of realism, controllability, resolution, and duration of the generated videos. Starting from zero-shot video models~\cite{khachatryan2023text2video, liu2024video, kahatapitiya2025object, esser2023structure}, which deploy pretrained image diffusion models to generate consistent frames, \eg, through cross-frame attention, modern video diffusion models rely on spatio-temporal denoising architectures, \ie, 3D UNets~\cite{blattmann2023align, blattmann_stable_2023, guoanimatediff} or 3D DiTs~\cite{opensora,opensora_plan, yang2024cogvideox}. This involves inflating image denoising models by adding temporal transformers and convolutions to denoise multiple frames simultaneously. Despite their impressive generation qualities, spatio-temporal denoising architectures demand high memory and computational power, which limits their usage to clouds with high-end GPUs. This hinders the wide adoption of video generation technology for many applications that require generating content locally on mobile devices.

\begin{figure}[t]
\centering
\includegraphics[width=0.9\columnwidth]{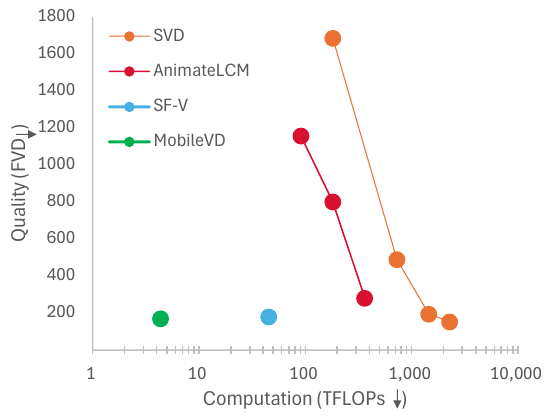}
\caption{
\topic[0mm]{Quality-efficiency trade-off.}
Our \methodname accelerate SVD by $523\times$ (in FLOPs) with a slight decrease in the generation qualities (in FVD) reaching to a better quality vs. efficiency trade-off than alternatives.
}
\vspace{-1.5 em}
\label{fig:flops_vs_fvd_main}
\end{figure}

Prior work on accelerating video diffusion models has mostly focused on reducing the number of sampling steps~\cite{wang2024animatelcm, zhang_svf_2024}. By extending the consistency models~\cite{song2023consistency} and adversarial distillation~\cite{sauer2024fast} to video diffusion models, they managed to reduce the number of denoising steps from 25 to only 4~\cite{wang2024animatelcm} and 1 step~\cite{zhang_svf_2024}, which tremendously accelerates video generation. However, step distillation alone does not reduce the memory usage of the model, which is the key setback in deploying video diffusion models on mobile devices.

This paper is the first attempt to build video diffusion models for mobile. Starting from the spatio-temporal UNet from Stable Video Diffusion (SVD)~\cite{blattmann2023align, blattmann_stable_2023}, as a representative for a variety of video diffusion models, we conduct a series of optimizations to reduce the size of activations to build a \textit{mobile-friendly UNet}: driven by the lower resolution needs for user-generated content on phones, we opt for using a smaller latent space sufficient for generating $512\times256$ px frames. Instead of preserving the number of frames throughout the denoising UNet, we introduce additional temporal down- and up-sampling operations to extend the multi-scale representation both in space and time, which reduces the memory and computational cost with minimal loss in quality. Moreover, we discuss how naive visual conditioning through cross-attention leads to significant computational overhead that can be avoided without damaging visual quality.

We further accelerate the mobile-friendly UNet by reducing its parameters using a novel channel compression schema, coined \textit{channel funneling}, and a novel technique to prune the temporal transformers and temporal residual blocks from the UNet. Finally, following~\citet{zhang_svf_2024}, we reduce the number of denoising steps to a single step using adversarial finetuning. This results in the first mobile video diffusion model called \methodname, which is $523\times$ more efficient ($1817.2$ vs. $4.34$ TFLOPs) at a slightly worse quality in terms of FVD ($149$ vs. $171$) as reported in~\cref{fig:flops_vs_fvd_main}. \methodname generates the latents for a $14 \times 512 \times 256$ px clip in $1.7$ seconds on a Xiaomi 14-Pro smartphone which uses a Qualcomm Snapdragon\textsuperscript{\textregistered} 8 Gen 3 Mobile Platform.

\section{Related work}

\topic{Video generation.}
Fueled by advancements in generative image modeling using diffusion models, there has been notable progress in the development of generative video models~\cite{ho2022imagen, blattmann2023align, blattmann_stable_2023,  zhou2022magicvideo, girdhar2023emu, jin_pyramidal_2024, openai2024, kling2024, runway2024, mochi2024}. These video models generally evolve from image models by incorporating additional temporal layers atop spatial blocks or by transforming existing 2D blocks into 3D blocks to effectively capture motion dynamics within videos. Although these advancements have paved the way for the generation of high-resolution videos, the significant computational demands make them impractical for use on low-end devices. In this work, we address this by optimizing a representative of video generative model, SVD~\cite{blattmann_stable_2023}, to make it accessible to a broader range of consumer-graded devices.

\topic{Diffusion optimization.}
The problem of making diffusion models efficient naturally consists of the following two parts: (i) reducing the number of denoising steps and (ii) decreasing the latency and memory footprint of each of those steps. Reducing number of steps is achieved by using higher-order solvers~\cite{lu2022dpm, lu2022dpmpp, zhang2022fast}, distilling steps to a reduced set using progressive step distillation~\cite{salimans2022progressive, li2023snapfusion, meng2023distillation}, straightening the ODE trajectories using Rectified Flows~\cite{liu2022rectified, liu2022flow, zhu2025slimflow}, mapping noise directly to data with consistency models~\cite{song2023consistency, song2023improved, lu2024simplifying}, and using adversarial training~\cite{wang2022diffusion, sauer2025adversarial, sauer2024fast, zhang_svf_2024}. To decrease computational cost of each step, research has been done in weight quantization~\cite{he2024ptqd, pandey2023softmax, shang2023post} and pruning~\cite{choi2023squeezinglargescalediffusionmodels, li2023snapfusion} as well as architectural optimization of the denoiser~\cite{dockhorn2023distilling, kim2023bk, habibian2024clockwork, xie2024sana}. 
In this work, following~\cite{zhang_svf_2024} we reduce number of steps to one using adversarial training and optimize the UNet denoiser using multiple novel techniques.

\topic{On-device generation.}
On-device generation has attracted interest due to its ability to address privacy concerns associated with cloud-based approaches. There have been advancements in running text-to-image generation on mobile devices and NPUs~\cite{zhao2023mobilediffusion, choi2023squeezinglargescalediffusionmodels, chen2023speed, li2023snapfusion, castells2024edgefusion}. While there has been progress in the video domain with fast zero-shot video editing models~\cite{zhang2024fastvideoedit, kara2024rave, wu2024fairy}, there have been no attempts to implement spatiotemporal video generative models on device due to their high computational and memory requirements, which are challenging to meet within on-device constraints. In this work, we propose the first on-device model based on SVD~\cite{blattmann_stable_2023} image-to-video model.

\section{Mobile Video Diffusion}
\label{sec:method}

In this section, we propose a series of optimizations to obtain a fast and lightweight version of an off-the-shelf image-to-video diffusion model~\cite{blattmann_stable_2023}, suitable for on-device deployment.
\subsection{Preliminaries}
\label{subsec:preliminaries}
\topic{Base model.}
We adopt Stable Video Diffusion (SVD)~\cite{blattmann_stable_2023} as the base model for optimization. 
The SVD released checkpoint\footnote{\url{https://huggingface.co/stabilityai/stable-video-diffusion-img2vid/tree/9cf024d}} is an image-to-video model that by default generates $14$ frames at the resolution $1024 \times 576$ with $25$ sampling steps. To generate a video from an image, the input image is first mapped into a latent code of resolution $128\times72$ using a Variational Auto-Encoder (VAE). Then it is duplicated $14$ times and concatenated with a noise latent of spatiotemporal resolution $14\times128\times72$. The combined latent is then denoised by a conditional UNet through an iterative process. Additionally, the input image is encoded with a CLIP image embedding for use in cross-attention layers~\cite{pmlr-v139-radford21a}.
The UNet denoiser consists of four downsampling blocks, one middle block and four upsampling blocks. To handle the dynamics of video sequences, the model employs temporal blocks after spatial blocks. Notably, up- and downsampling is conducted across spatial axes only, and temporal resolution of the latent is kept constant to $14$ throughout the UNet. 
The UNet denoiser as-is is too resource-intensive for on-device use, requiring \num{45.43} TFLOPs and \num{376} ms per denoising step on a high-end A100 GPU. Using FLOPs and latency as proxy metrics, we propose a series of optimizations to make the model suitable for on-device deployment.

\topic{Adversarial finetuning.}
In addition to the high cost of the UNet, the iterative sampling process in video diffusion models further slows them down. For example, with the conventional 25-step sampling it takes 11 seconds to generate a video  on a high-end A100 GPU. To reduce the cost, we follow \sfv framework~\cite{zhang_svf_2024} and use adversarial finetuning, enabling our models to generate videos in a single forward pass. 
Namely, we initialize the discriminator feature extractor with an encoder part of the denoising UNet and do not train it. 
After each block of this backbone, the extracted feature maps are passed to the two projection discriminator heads, one spatial and one temporal~\cite{miyato_cgans_2018}.
The heads also use the frame index and the CLIP embedding of the conditional image as input.
At each training step, we apply the pseudo-Huber (Charbonnier)~\cite{charbonnier_1994} and non-saturating adversarial loss~\cite{goodfellow_nips_2014} to the generator output to update its weights.
To regularize the discriminator, the $R_1$ penalty is used~\cite{mescheder_which_2018}. For further details please refer to  the original \sfv work.

\begin{figure}[t]
\centering
\includegraphics[width=0.99\columnwidth]{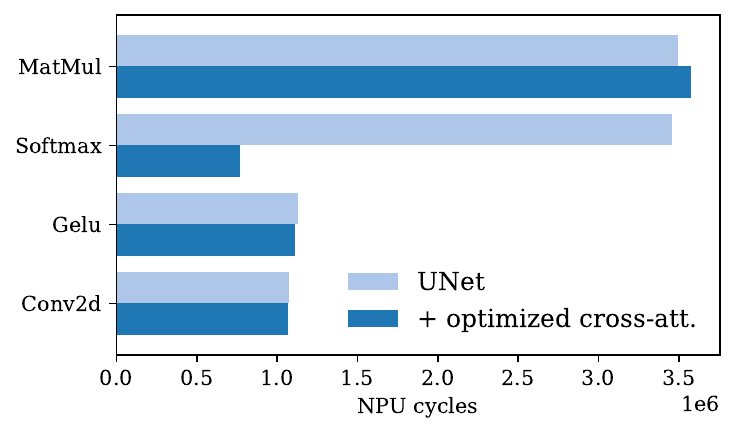} %
\caption{
\topic[0mm]{Effect of optimized cross-attention for a mobile device.}
We show the number of cycles of the top-4 operations on mobile hardware for an input resolution of $128 \times 128$. Note that removing the no-op similarity map computation in cross-attention layers reduces cycles on softmax operations by roughly \num{80}\%.
}
\label{fig:base_v_opt}
\vspace{-2ex}
\end{figure}

\subsection{Mobile-friendly UNet}\label{sec:lightweight_unet}
Our first modifications to SVD architecture regard optimizations along the latent and feature resolution that affect both GPU and mobile latency. Then we highlight some lossless optimizations that have big effect on mobile latency. These are the first optimizations that allow us to run the UNet on device.

\topic{Low resolution finetuning.}
To satisfy the memory constraints of mobile devices, we decrease the resolution of denoising UNet input to $64 \times 32$ by resizing the conditioning image to $512 \times 256$. 
While the released SVD checkpoint supports multiple resolutions, we found out that the original model demonstrates deteriorated quality for our target spatial resolution as reported in~\cref{tab:sota_fvd}.
Therefore, we  finetuned the diffusion denoiser at our target spatial size. 
With this optimization, computational cost and GPU latency is reduced to \num{8.60} TFLOPS and \num{82} ms respectively, see~\cref{tab:optimizations_fvd}.

\topic{Temporal multiscaling.}
To further reduce computational burden, one might lower the input resolution more heavily. However, this significantly degrades visual quality.
Instead, we can additionally downscale the feature maps by a factor of 2 along either spatial or temporal axis after the first downsampling block of the UNet.
To maintain the same output shape, this is accompanied by the corresponding nearest-neighbor upscaling operation before the last upsampling block. We refer to these two additions as multiscaling.
In terms of computational cost, spatial multiscaling results in a \num{51}\% reduction in FLOPs and \num{33}\% in GPU latency, while temporal multiscaling reduces FLOPs and GPU latency by \num{34}\% and \num{22}\%, respectively. For our final deployed model, we use temporal multiscaling as it offers a better trade-off between quality and efficiency, as reported in~\cref{subsubsec:reslution_unet}.

\topic{Optimizing cross-attention.}
In SVD, each cross-attention layer integrates information from the conditioning image into the generation process. The attention scores are computed similarly to self-attention layers, $\textrm{Attn}\lft(Q,K,V\rgt) = \textrm{softmax}\lft(QK^T/ \sqrt{d}\rgt)V$, but the key and value pair ($K, V$) comes from the context tokens. However, the context in the cross-attention blocks always consists of a single token, namely, the CLIP embedding vector of the conditional image. Consequently, each query token attends to only a single key token.
Therefore, computation of a similarity map $QK^T$ and softmax becomes a no-op, and query and key projection matrices can be removed without any difference in results.
While this loss-less optimization reduces GPU latency only by 7\%, we found that it significantly impacts on-device behavior.
In detail, at target resolution of $512 \times 256$, the model runs out of memory (OOM) on the device without the described modification of cross-attention.
And at a smaller resolution of $128\times128$ this optimization reduces mobile latency by \num{32}\%. %
The gain is attributed to the time-consuming nature of softmax operation on device, as shown in~\cref{fig:base_v_opt}.

\begin{figure}[t]
\centering
\includegraphics[width=1.0\columnwidth]{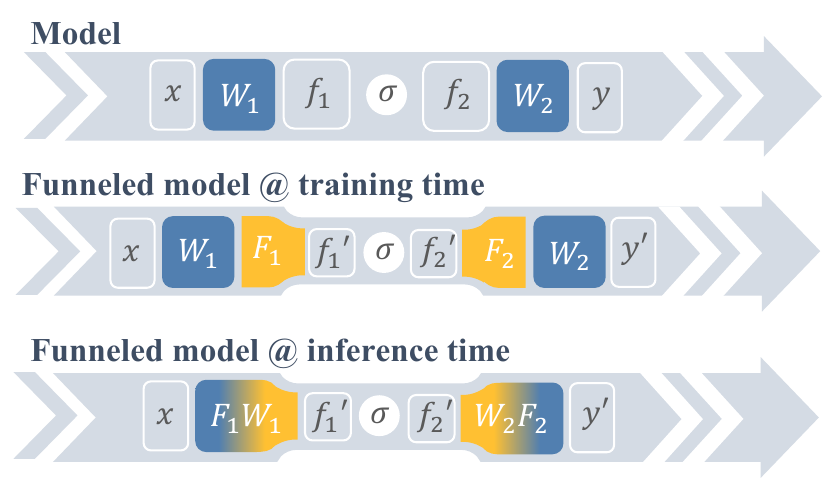}
\caption{
\topic[0mm]{Channel funnels.}
We show an example of channel funnels applied to a couple of layers within the model.
At training time, funnels serve as adaptors reducing model width.
At inference, they are merged with corresponding weight matrices without loss of quality.
}
\label{fig:funnel_pipeline}
\vspace{-2ex}
\end{figure}

\subsection{Channel funnels}
\label{sec:funnel}
Channel size, which refers to the width of neural network layers, is crucial for scaling models. Increasing channel size generally enhances model capacity but also increases the number of parameters and computational cost. Research has focused on compressing the number of channels by either discarding less informative channels through channel pruning~\cite{muralidharan2024compact,fang2024isomorphic} or representing weight matrices as low-rank products of  matrices using truncated singular decomposition~\cite{zhang2015acceleratingVD}.
However, these could have sub-optimal tradeoffs in quality and efficiency when deployed on device. 
While low-rank decomposition is relatively straightforward to implement, it only reduces the number of parameters and computational complexity if the rank is reduced by more than half for feed forward layers. Additionally, not all layers of neural network types benefit equally from low-rank factorization. Moreover, this method does not reduce the size of output feature maps, which can cause significant memory overhead on mobile devices. In this part, we propose \emph{channel funnels}, a straightforward method to reduce the number of channels at inference time, with negligible quality degradation. 
Intuitively, a channel funnel is placed between two affine layers, reducing the intermediate channel dimensionality to save computation. 
Consider two consecutive linear layers  $y = W_2 \sigma\lft(W_1 x\rgt)$ and the non-linear function $\sigma$ in between, where $W_1 \in \mathbb{R}^{\cinner \times \cin}$, $W_2 \in \mathbb{R}^{\cout \times \cinner}$.
We introduce two funnel matrices, $F_1 \in \mathbb{R}^{c' \times \cinner}$ and $F_2 \in \mathbb{R}^{\cinner \times c'}$, where $c' < \cinner$, and rewrite our network as $y' = W_2 F_2 \sigma\lft(F_1 W_1 x\rgt)$.
The $F$-weights, having fewer channels, can be merged during inference with their associated $W$-weights, resulting a weight matrix with smaller inner dimension $c'$, see~\cref{fig:funnel_pipeline}. We refer to the reducing factor of the inner rank of the layers, \ie $\nicefrac{c'}{\cinner}$, as the \emph{fun\sout{nel}-factor}.

\topic{Initialization.}
We propose to use \emph{coupled singular initialization (CSI)} for funnel matrices $F_1$ and $F_2$ that improves the model results, as demonstrated below.
In this method we ignore the non-linearity and consider the effective weight matrix $W_2 F_2 F_1 W_1$ which in practice has rank of $c'$.
For that reason, we aim to use such an initialization which mimics the best $c'$-rank approximation of the original effective matrix.
As Eckart-Young-Mirsky theorem implies, this can be achieved by means of  truncated singular decomposition~\cite{Eckart_Young_1936}.
Let $W_2 W_1 = U \Sigma V^T$ be the singular vector decomposition, and $U_{c'} \Sigma_{c'} V_{c'}^T$ to be its truncated $c'$-rank version.
Then it suffices to set $F_2 = W_2^\dagger U_{c'} \Sigma_{c'}^{1/2}$ and $F_1 = \Sigma_{c'}^{1/2} V_{c'}^T W_1^\dagger$ to obtain $W_2 F_2 F_1 W_1 \approx U_{c'} \Sigma_{c'} V_{c'}^T$, where $\dagger$ means the Moore-Penrose pseudoinverse.

\topic{Training.}
We apply channel funnels in attention layers where query and key embedding $W_q$ and $W_k$ are used to compute the similarity map of $X W_q \left( X W_k \right)^T$.  With funnel matrices $F_q$ and $F_k$ of size $\cinner \times c'$, we modify the aforementioned bilinear map as $X W_q F_q \left( X W_k F_k \right)^T = X W_q F_q F_k^T W_k^T X^T$. 
Similarly, funnels are applied to the pair of value and output projection matrices of a self-attention layer. In our ablations we also show the impact of channel funnel on convolutions in residual blocks. Unless specified otherwise, we use fun-factor of 50\%.

\begin{figure}[t]
\centering
\begin{subfigure}[t]{\linewidth}
\includegraphics[width=\linewidth]{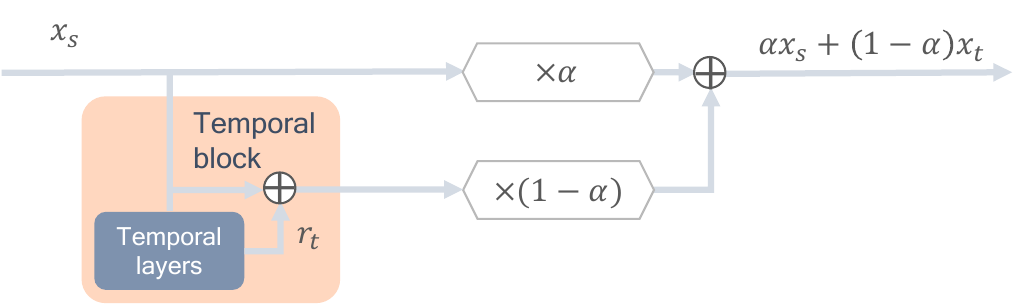}
\subcaption{
Temporal blocks in the original architecture of SVD.
}
\label{fig:temporal_block_original_a}
\end{subfigure}
\begin{subfigure}[t]{\linewidth}
\includegraphics[width=1.08\linewidth]{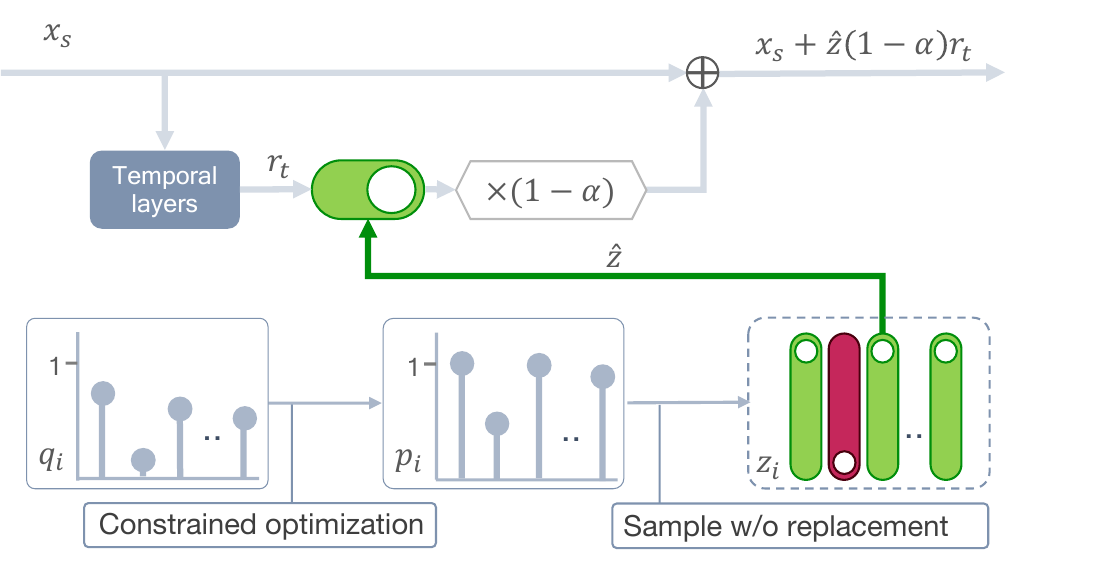}
\subcaption{
    A zero-one gate multiplier is sampled to each temporal block during training.
}
\label{fig:from_chances_to_gates_b}
\end{subfigure}
\caption{%
    \topic[0mm]{Learned pruning of temporal blocks.}
    (a) Each temporal block in the base SVD model is implemented as a residual block \wrt its input $x_s$. 
    The output of temporal layers $r_t$ is summed with the input $x_s$, and after that once again averaged with $x_s$ with learnable weight $\alpha$. 
    By reordering the terms, we derive the effective update rule $\alpha x_s +\left(1 - \alpha\right) x_t = x_s + \left(1 - \alpha\right) r_t $.
    (b) During training, we introduce a scalar gate $\hat{z} \in \left\lbrace 0, 1 \right\rbrace$ to the residual update rule of each block. We learn importance values $\left\{q_i\right\}_i$ of  temporal blocks which are transformed to inclusion probabilities $\left\{p_i\right\}_i$ at each training step. Zero-one gate multipliers are sampled according to those probabilities. To enable end-to-end training, we use straight-through estimator trick. 
    At inference, only $n$ blocks with highest importance values are used.
}
\label{fig:temporal_block}
\vspace{-2ex}
\end{figure}

\begin{figure*}[t]
    \newlength{\mrgone}
    \setlength{\mrgone}{-0.6cm}
    \newlength{\wid}
    \setlength{\wid}{0.55\textwidth}
    \renewcommand{\cellalign}{cl}
    \centering
    \begin{tabular}{@{}cc@{}}
        \hspace{\mrgone} \includegraphics[width=\wid]{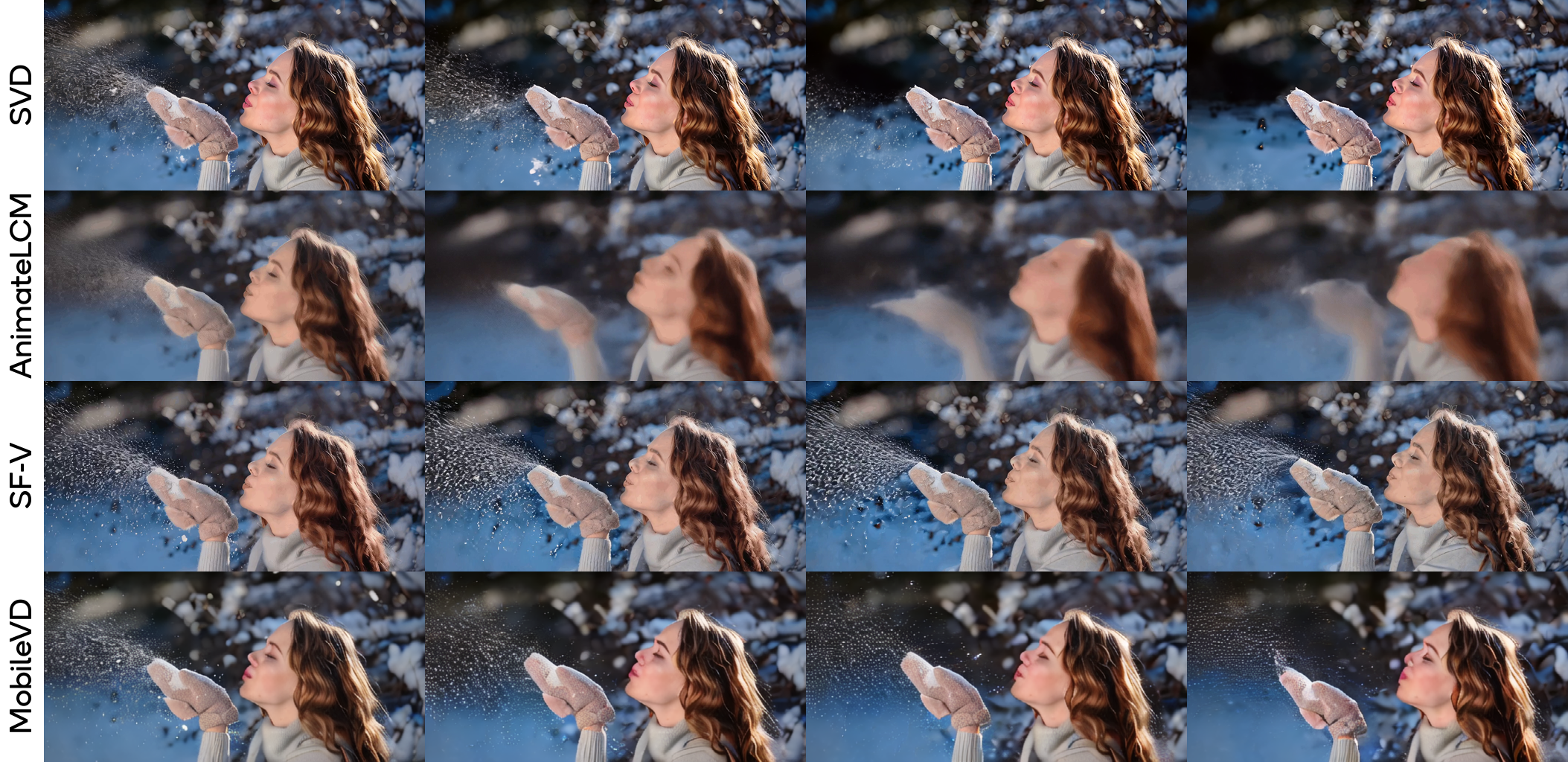}       
        & \hspace{\mrgone}\includegraphics[width=\wid]{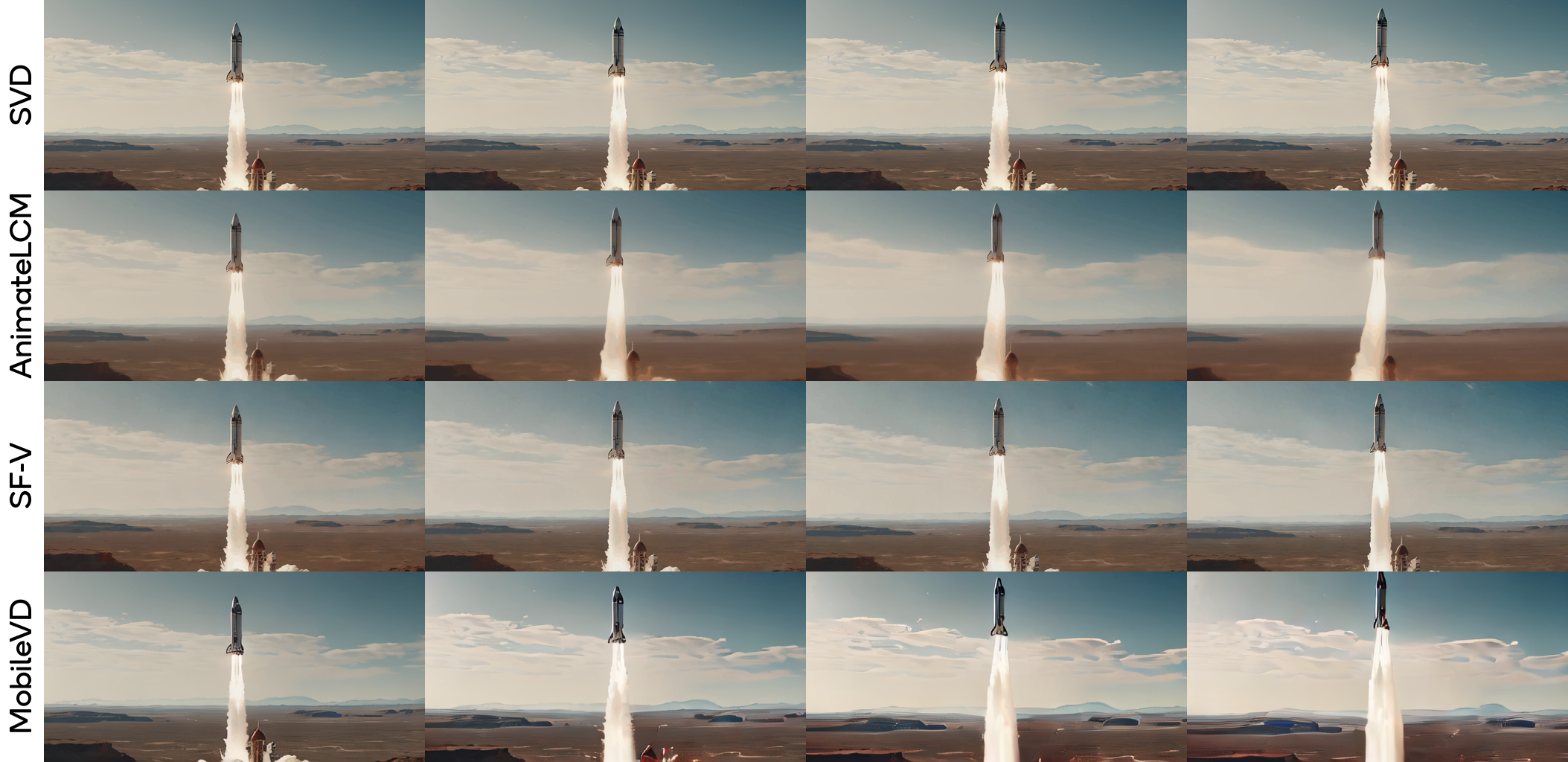} \\
        \hspace{\mrgone} \includegraphics[width=\wid]{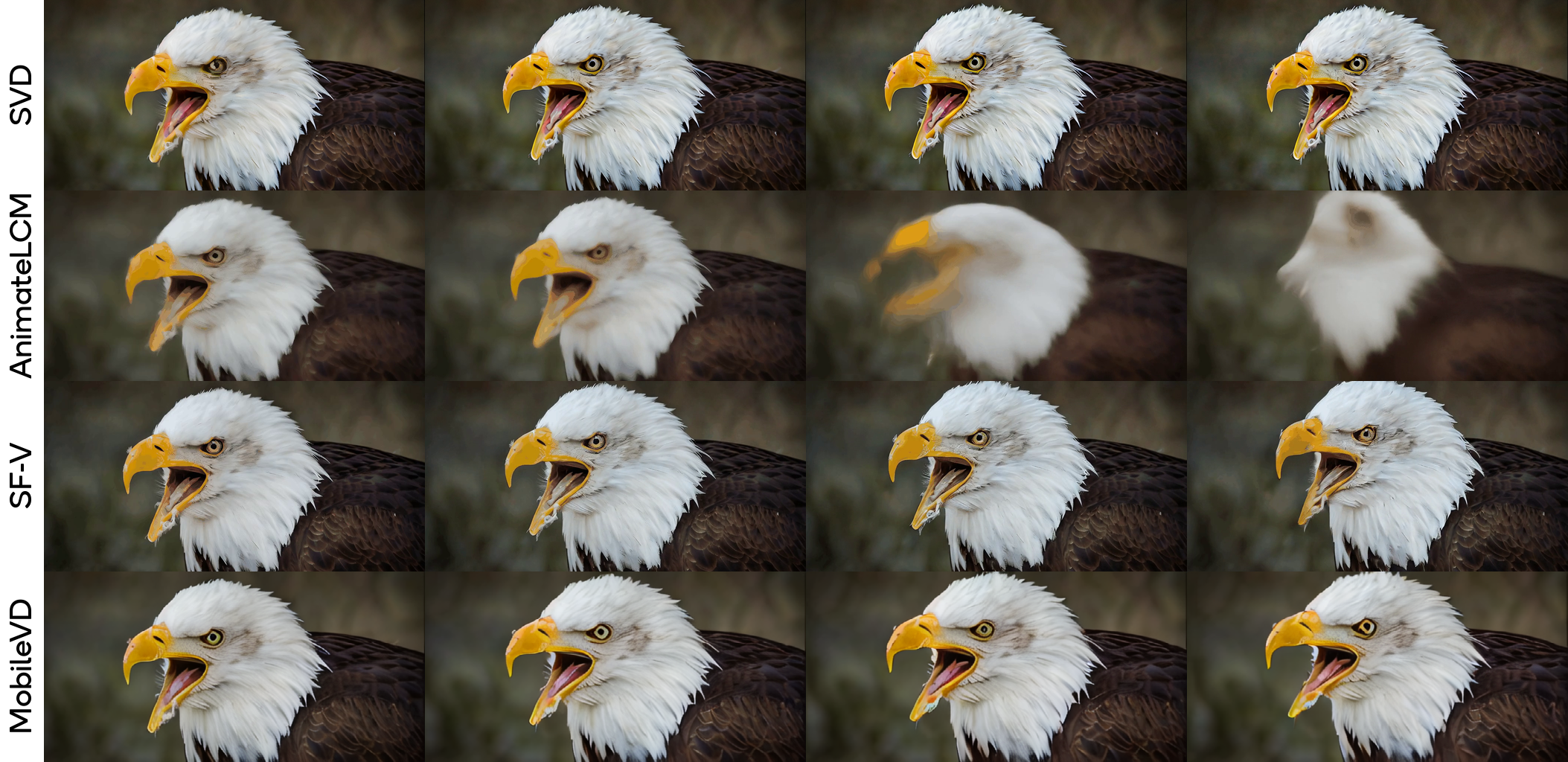}
        & \hspace{\mrgone} \includegraphics[width=\wid]{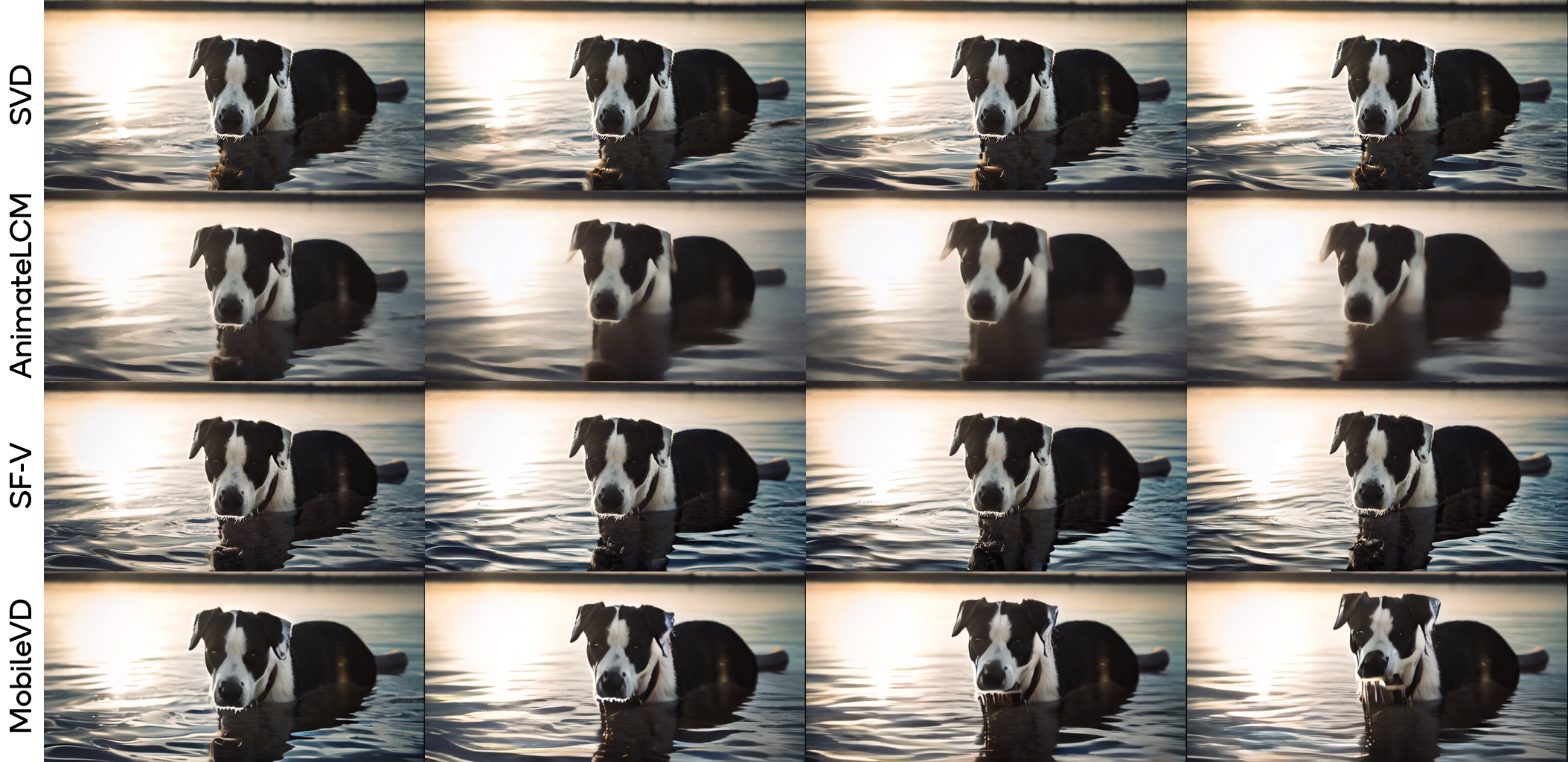} \\
        \hspace{\mrgone} \includegraphics[width=\wid]{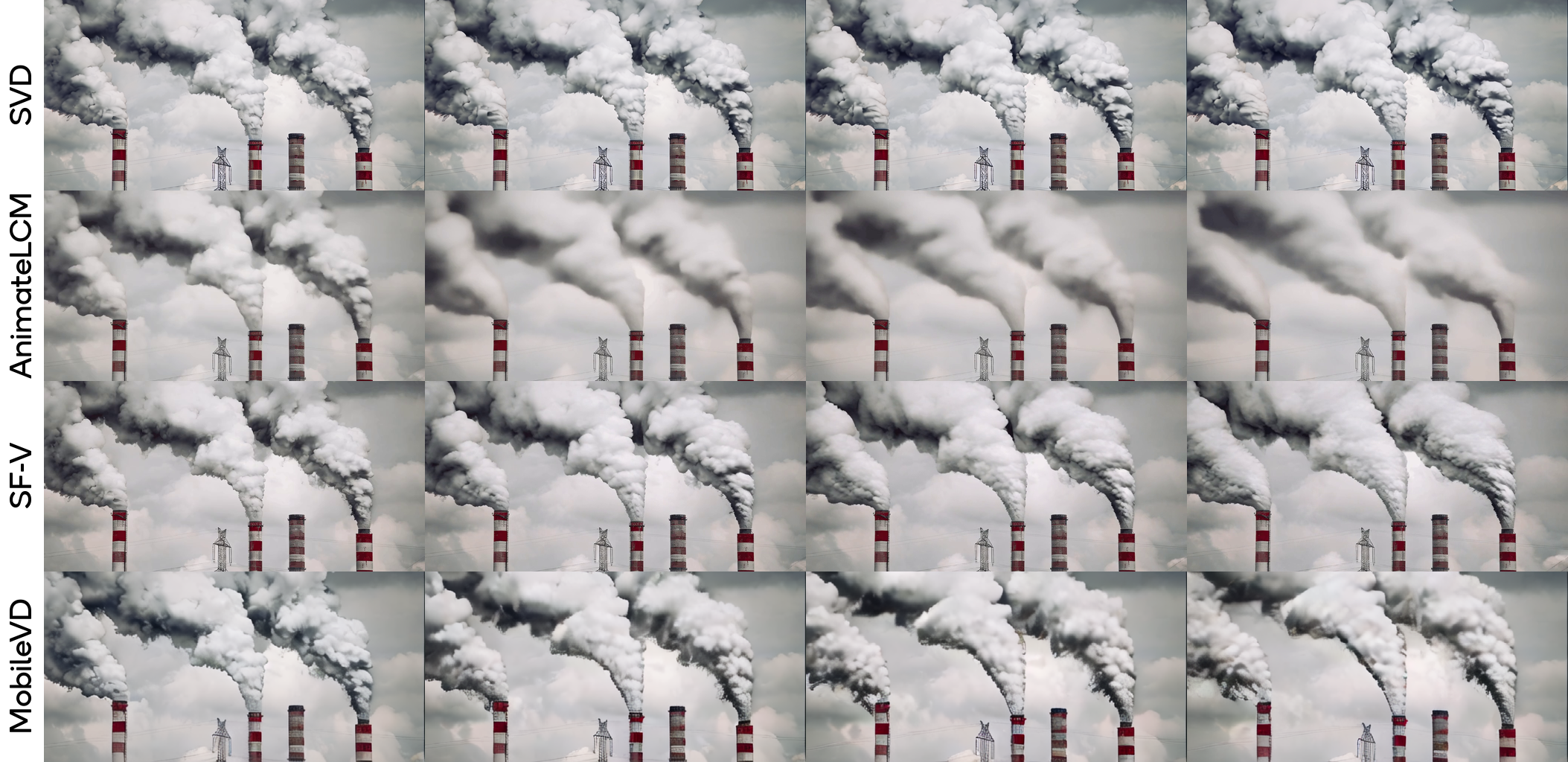}
        & \hspace{\mrgone} \includegraphics[width=\wid]{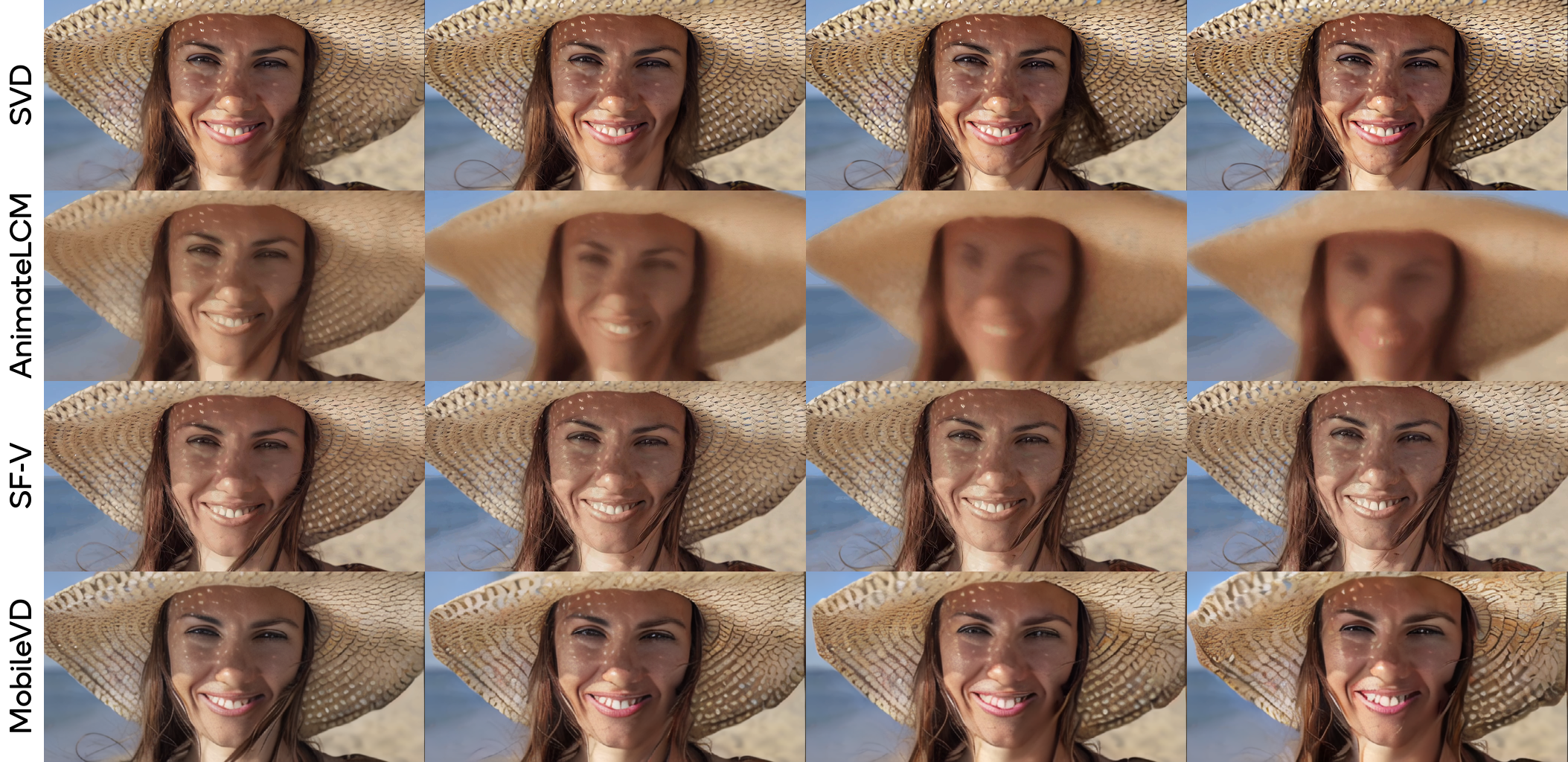}
    \end{tabular}

\caption{%
    \topic[0mm]{Comparison with recent models.}
    We provide the 1st, 6th, 10th and 14th frames from the videos generated with different models\protect\footnotemark.
    For AnimateLCM~\cite{wang2024animatelcm} and \sfv~\cite{zhang_svf_2024} we downsampled the released high-resolution videos from~\citet{zhang_svf_2024}.
    For SVD~\cite{blattmann_stable_2023} and our \methodname model, videos were generated at their native resolution, $1024 \times 576$ and $512 \times 256$ respectively.
    }
\label{fig:models_comparison}
\vspace{-1ex}
\end{figure*}

\subsection{Temporal block pruning}
\label{sec:temporal_adaptors}
\topic{Motivation.}
The original UNet in the SVD model does not contain 3D convolutions or full spatiotemporal attention layers. 
Instead, to model motion dynamics, SVD incorporates temporal blocks after each spatial block. 
The output of  such group is a linear combination of the spatial and temporal block outputs, $x_s$ and $x_t$ respectively, $\alpha x_s + \left(1-\alpha\right) x_t$, where $\alpha$ is a weight scalar that emphasizes spatial features when higher and temporal features when lower, see~\cref{fig:temporal_block_original_a}.
While this approach leverages image model priors when extending the model to videos, it adds computational cost. Moreover, not all of these blocks are equally important for maintaining quality. Here, we propose a learnable pruning technique to remove less important temporal blocks while minimizing quality degradation.
To this end, for each temporal block we define an \emph{importance} value $q_i, \; 0 \le q_i \le 1$ where $i = 1,\dots,N$ and $N$ is number of temporal blocks. 
The values $\left\{q_i\right\}_i$ are trained to identify the blocks that are the most crucial for model performance.
At inference time, only $n$ blocks with the highest importance $q_i$ are kept where $n$ is the budget chosen in advance. 
In our experiments we found it possible to remove as many as 70\% of all temporal blocks which leads to 14\% reduction in FLOPS as compared to the model with optimizations from~\cref{sec:lightweight_unet} applied.

\topic{Training.} At each training iteration, we sample randomly $n$ blocks which participate in the computational graph. To this end, we define the indicator variable $z_i \in \left\{0, 1\right\}$ where $z_i=1$ if $i$-th block is sampled for participation, and $z_i = 0$ otherwise. The sum of all participants should equal to the budget value \ie $\sum\nolimits_i z_i = n$.
Note that $\mathbb{E} \left[\sum\nolimits_i z_i\right] = \sum\nolimits_i \mathbb{E} \left[z_i\right] = \sum\nolimits_i p_i = n,$ where $p_i$ is the \emph{inclusion probability} of the $i$-th block. We relate $i$-th importance value $q_i$ to the inclusion probability of $p_i$ using the constrained optimization problem { \small%
\begin{equation*}
\min\limits_{c, \{p_i\}_i} \sum\nolimits_i (p_i - c q_i)^2, \quad \textrm{s.t.} \quad \sum\nolimits_i p_i = n, \; 0 \le p_i \le 1, \; c \geq 0.
\end{equation*}%
}%
\footnotetext{Conditioning images are under MIT license \textcopyright~2024 Fu-Yun Wang. \url{https://github.com/G-U-N/AnimateLCM/blob/9a5a314/LICENSE.txt}}
\newcounter{auxFootnote}\setcounter{auxFootnote}{\value{footnote}}
We obtain a closed-form solution of the above optimization with Lagrange multipliers which is differentiable \wrt $q_i$.
In simple words, we find the proper set of inclusion probabilities $p_i \approx c q_i$, with importance values $q_i$ and a proportionality coefficient $c$. 
After obtaining $\left\{p_i\right\}_i$ at each training iteration, we sample $n$ blocks without replacement using Brewer's sampling~\cite{tille_sampling_2006,brewer_simple_1975}. As such sampling is non-differentiable, we employ straight-through estimators (STE)~\cite{bengio2013estimatingpropagatinggradientsstochastic} to enable end-to-end training.
Namely, we define gate $\hat{z}_i$ as STE of the probability $p_i$, \ie $\hat{z}_i = p_i + \texttt{stop\_gradient}\lft( z_i - p_i \rgt)$.
The output of a temporal block is multiplied by this gate which serves as an analog of a Dropout layer~\cite{hinton_dropout_2012}, as~\cref{fig:from_chances_to_gates_b} shows.
For practical aspects of training, please refer to the~\cref{supp:temporal_pruning}. %

\section{Experiments}

\begin{table}[t]
\newcommand{\leftspace}{}
\centering
\resizebox{.99\columnwidth}{!}{
\def\arraystretch{1.}%
\begin{tabular}{lccccc} 
\toprule
\leftspace \multirow{2}{*}{Model} %
& \multirow{2}{*}{NFE} %
& \multirow{2}{*}{FVD $\downarrow$} %
& \multirow{2}{*}{TFLOPs $\downarrow$} %
& \multicolumn{2}{c}{Latency (ms) $\downarrow$} \\ 
\cmidrule{5-6}
& &  &  & GPU & Phone \\
\midrule
\multicolumn{6}{l}{\cellcolor{gray!20}{\emph{Resolution} $\mathit{1024 \times 576}$}} \\ %
\leftspace SVD         & 50 & $149$   & 45.43 & 376 & OOM \\
\leftspace AnimateLCM$^*$  & 8 & $281$   & 45.43 & 376 & OOM \\
\leftspace UFOGen$^*$    & 1  & $1917$ & 45.43 & 376 & OOM \\
\leftspace LADD$^*$       & 1  & $1894$ & 45.43 & 376 & OOM \\
\leftspace \sfv{}$^*$        & 1  &  $181$ & 45.43 & 376 & OOM \\ 
\multicolumn{6}{l}{\cellcolor{gray!20}{\emph{Resolution} $\mathit{512 \times 256}$}} \\ %
\leftspace SVD         & 50 & 476   & 8.60 & 82 & OOM \\
\leftspace \methodname (ours) & 1 &   171 & 4.34 & 45 & 1780 \\
\bottomrule
\end{tabular}
}
\caption{
\topic[0mm]{Comparison with recent models.} FLOPs and latency are provided for a single function evaluation with batch size of 1. For rows marked with asterisk$^*$ FVD measurements were taken from~\citet{zhang_svf_2024}, while performance metrics are based on our measurements for UNet used by SVD. 
For consistency with these results, FVD for SVD and our \methodname model was measured on UCF-101 dataset at 7 frames per second. 
}
\label{tab:sota_fvd}
\vspace{-2ex}
\end{table}

\begin{table*}[t]
\newcommand{\leftspace}{\phantom{a}}
\centering
\def\arraystretch{1.1}%
\setlength{\tabcolsep}{12pt} %
\resizebox{\textwidth}{!}{%
\footnotesize  %
\begin{tabular}{lcccccc} %
\toprule
\multirow{2}{*}{Model} %
& \multirow{2}{*}{NFE} %
& \multicolumn{2}{c}{FVD $\downarrow$} %
& \multirow{2}{*}{TFLOPs $\downarrow$} %
& \multicolumn{2}{c}{Latency (ms) $\downarrow$} \\ 
\cmidrule{3-4}\cmidrule{6-7}
& & 25 FPS & 7 FPS &  & GPU & Phone \\
\midrule
SVD (resolution $1024 \times 576$)        & 50 & 140 & $149$   & 45.43 & 376 & OOM \\
SVD (resolution $512 \times 256$)         & 50  & 366 & 476   & 8.60 & 82 & OOM \\
\leftspace + low-resolution finetuning & 50 & 194 & 196 & 8.60 & 82 & OOM \\
\leftspace\leftspace + optimized cross-attention & 50 & 194 & 196 & 8.24 & 76 & 3630 \\
\leftspace\leftspace\leftspace + adversarial finetuning & 1 &  133 & 168 & 8.24 & $76$ & 3630 \\
\leftspace\leftspace\leftspace\leftspace + temporal multiscaling & 1  &  139 & 156 & 5.42 & $59$ & 2590\\
\leftspace\leftspace\leftspace\leftspace\leftspace + temporal block pruning & 1  &  127 & 150 & 4.64 & 47 &  2100 \\
\leftspace\leftspace\leftspace\leftspace\leftspace\leftspace + channel funneling & 1  &   149 & 171 & 4.34 & 45 & 1780 \\
\bottomrule
\end{tabular}
}
\caption{
    \topic[0mm]{Effect of our optimizations.} 
    We successfully deployed the image-to-video model to a mobile device without significantly sacrificing the visual quality.
    FLOPs and latency are provided for a single function evaluation with batch size of 1.
    We call the model in the bottom row Mobile Video Diffusion, or \methodname.
}
\label{tab:optimizations_fvd}
\vspace{-3ex}
\end{table*}

In this section, we describe our experimental setup, followed by a qualitative and quantitative evaluation of our model. Finally, we present ablations to justify our choices for the final model deployed on the device.
\subsection{Implementation details}
\topic{Dataset.}
For our experiments, we use a collected video dataset. 
We follow the data curation pipeline from OpenSora~\cite[V.1.1]{opensora}, selecting videos with motion scores between \num{4} and \num{40}, and aesthetic scores of at least \num{4.2}. This results in a curated dataset of approximately 31k video files for finetuning.

\topic{Micro-conditioning.}
UNet used by SVD, has two conditioning parameters called \emph{FPS} and \emph{Motion bucket id}. 
To obtain videos with different FPS, we chose each $k$-th frame from the video with randomly sampled $k$, $1 \leq k \leq 4$, and adjusted the native FPS value of the video accordingly.
The notion of motion bucket has not been properly documented at time of  the model release.
While it is connected with the speed of motion in the video, as described in the original SVD paper, the motion estimator has not been open-sourced.
For that reason, we implemented our own definition of the motion bucket using a simple heuristic.
For the sampled chunk of 14 frames, we converted them to gray color, spatially downsampled to the resolution of $14 \times 128 \times 64$, and reshaped to the matrix of size $14 \times \left(128 \cdot 64\right)$. After that, we computed the singular values of that matrix. Note that for a completely static video this matrix has a rank of 1, and therefore the only non-zero singular value.
And the less similar frames are, the more singular components are needed to faithfully reconstruct the full video.
Based on that observation, we re-defined the motion bucket as the area under the normalized cumulative sum of singular values.

\topic{Training.}
For training, we begin with the original SVD weights, apply all the optimizations (excluding temporal block pruning), and train the resulting UNet with a standard diffusion loss for 100k iterations on 4 A100 GPUs with a total batch size of 8. This UNet serves as the initialization for adversarial finetuning, where a good initialization is crucial for fast convergence. We found that training for 5k iterations on 2 GPUs suffices for the second stage. 
We implement temporal block pruning in the second stage as we observed that excessive pruning in the first stage hinders model performance. In this case, we train the second stage for 10k steps. Check~\cref{supp:additional_details} for more details.

\topic{Metrics.}
We used DeepSpeed library~\cite[v0.14.2]{deepspeed_2020} to measure the number of FLOPs.
For GPU latency, NVIDIA\textsuperscript{\textregistered} A100 SXM\textsuperscript{\tiny{TM}} 80GB GPU was used.
To measure GPU latency, UNet model was compiled using the PyTorch~\cite[v2.0.1]{pytorch_neurips_2019} compiler with default settings. Phone latencies are measured on a Xiaomi-14 Pro that uses Snapdragon\textsuperscript{\textregistered} 8 Gen. 3 Mobile Platform with a Qualcomm\textsuperscript{\textregistered} Hexagon\textsuperscript{\tiny{TM}} processor. %
All performance metrics were measured for a single UNet evaluation with batch size of 1.
For video quality metric, we follow existing works~\cite{blattmann_stable_2023, zhang_svf_2024} by using Fréchet Video Distance (FVD)~\cite{unterthiner2019fvd} with I3D feature extractor~\cite{kay2017kineticshumanactionvideo}. We use the first frame of UCF-101 dataset~\cite{soomro2012ucf101dataset101human} as the conditioning image, generating 14-frame clips at the model’s native resolution.
Unless stated otherwise, we set the FPS, an SVD micro-condition, to \num{25}, matching the UCF-101 frame rate~\cite{blattmann_stable_2023}.
For motion bucket, for our models we used the median value at the specified frame rate across UCF-101 data.
For SVD the default bucket of 127 was used.

\subsection{Results}
\topic{\methodname.} 
In \cref{tab:optimizations_fvd} we compare \methodname to our base model. 
As the results indicate, each optimization reduces speed of inference on a mobile phone.
Decreased spatial resolution and optimized cross-attention together unlock on-device execution with a latency of 3.6 seconds. 
Temporal downsampling layers in UNet make inference 29\% faster.
Additionally, temporal blocks pruning  reduces phone latency by 13\%, and  channel funneling further decreases it by 9\%. 
Empirically, we found that a difference of up to  \num{20} FVD units does not significantly affect visual quality and typically falls within the standard deviation when using different random seeds.
Based on that, we see that our optimizations have minimal impact on FVD.

\topic{SOTA comparison.}
In \cref{tab:sota_fvd,fig:flops_vs_fvd_main} we compare to the recent works that similarly aim for accelerating SVD image-to-video model, namely, AnimateLCM~\cite{wang2024animatelcm}, LADD~\cite{sauer2024fast}, \sfv~\cite{zhang_svf_2024}, and UFOGen~\cite{Xu_2024_CVPR}. We observe that \methodname leads to a better FVD with a significantly less computation.

\topic{Qualitative results.}
Following previous works~\cite{wang2024animatelcm, zhang_svf_2024}, we show qualitative results with a commonly used set of conditioning images. Sampled frames from the generated videos are presented in~\cref{fig:models_comparison}.
For this visualization, we generated videos at 7 FPS and with spatial resolution of $512 \times 256$ using our \methodname.
Please refer to our webpage for the full videos. 
We observe that in general our method produces videos with sharp details and consistent motion.

\subsection{Ablations}\label{subsec:ablations}
In this section, we evaluate our design choices through ablation experiments. Unless otherwise specified, we use the SVD checkpoint with low-resolution input, optimized cross-attention, and adversarial finetuning as the reference model, \cf~\cref{tab:optimizations_fvd}.

\begin{table}[t]
\newcommand{\leftspace}{\phantom{a}}
\centering
\resizebox{0.99\columnwidth}{!}{
\def\arraystretch{1.2}%
\begin{tabular}{cccccc} 
\toprule
\multirow{2}{*}[-0.5mm]{\makecell{Spatial\\multiscaling}}
& \multirow{2}{*}[-0.5mm]{\makecell{Temporal\\multiscaling}}
& \multirow{2}{*}{FVD $\downarrow$} 
& \multirow{2}{*}{TFLOPs $\downarrow$}
& \multicolumn{2}{c}{Latency (ms) $\downarrow$}
\\%
\cmidrule{5-6}
 &  & & & GPU & Phone\\
\midrule
 $\times$ & $\times$ & 133 & 8.24 & 76 & 3630 \\ %
 $\times$ & \checkmark  & 138 & 5.42 & 59 & 2590 \\ %
 \checkmark & $\times$ & 145 & 4.35 & 51 & 2280 \\ %
 \checkmark & \checkmark & 163 & 3.39 & 48 & --- \\ %
\bottomrule
\end{tabular}
}
\caption{
\topic[0mm]{Effect of additional multiscaling layers in UNet.}
We observe that both temporal and spatial multiscaling has good impact on mobile latency without compromising much on FVD, while combining the two increases FVD by a noticeable amount.
}\label{tab:lightweight_ablation}
\vspace{-3ex}
\end{table}

\subsubsection{Resolution impact in UNet}\label{subsubsec:reslution_unet}
In~\cref{tab:lightweight_ablation} we compare different latent multiscaling optimizations proposed in~\cref{sec:lightweight_unet}. Specifically, we investigate the impact of inserting spatial or temporal multiscaling layers after the first UNet block in terms of FLOPs, latency, and FVD.
Spatial multiscaling offers better FLOPs and latency than temporal multiscaling and it increases FVD by 12 units compared to 5 for temporal downsampling. While we typically do not see video degradation with this increase in FVD, we do see clear degradation in video quality when using spatial instead of temporal multiscaling. We hypothesize that this is because the model already enjoys multiple stages of spatial downsampling, while temporal downsampling was originally absent. Based on these results, we have opted for temporal downsampling for our mobile-deployed model. We hold similar conclusions for combining the two multiscaling approaches with spatiotemporal multiscaling.

\subsubsection{Funnel finetuning}
\topic{Fun-factor and funnel initialization.}
Reducing the width of affine layers in the model is a form of lossy compression, and overly aggressive fun-factor values will hurt the model performance. In \cref{tab:init_fun_factor}, we observe the impact of the fun-factor. Reducing the fun-factor to $0.25$ results in a performance loss of 22 FVD units compared to fun-factor of $1$ (\ie, no compression). To avoid performance degradation from stacking multiple optimizations described in \cref{sec:method}, we set the fun-factor to $0.5$ for the deployed model. Additionally, the results highlight that the proposed coupled singular initialization (CSI) is essential for optimal funnel behavior, whereas the standard He initialization~\cite{he2015delvingDI} is suboptimal.

\begin{table}[t]
\centering
\resizebox{\columnwidth}{!}{
\def\arraystretch{1.2}%
\newcommand{\leftspace}{}
\begin{tabularx}{1.2\linewidth}{Xlcc}
\toprule
\leftspace Initialization & \makecell{Fun-factor} & FVD $\downarrow$ \\ 
\midrule
Coupled singular init. (CSI)          & \leftspace 0.25  & 155      \\
Coupled singular init. (CSI)          & \leftspace 0.50  & 132      \\
Coupled singular init. (CSI)          & \leftspace 0.75  & 145      \\
Coupled singular init. (CSI)          & \leftspace 1.00  & 133      \\
He init.~\cite{he2015delvingDI}        & \leftspace 0.50  & 332      \\
\bottomrule
\end{tabularx}
}
\vspace{-1ex}
\caption{
\topic[0mm]{Effect of funnel initialization and fun-factor.}
Initialization funnels with CSI is crucial to getting good FVD as He initialization~\cite{he2015delvingDI} obtains roughly 200 FVD units more. Additionally, we see that reducing the fun-factor beyond 0.5 starts to affect the performance.
} 
\label{tab:init_fun_factor}
\vspace{-2ex}
\end{table}

\topic{Funnel merging and low-rank layers.}
We compare channel funnels with multiple baselines in terms of FLOPs, on-device latency and FVD. All baselines are applied on the same attention layers, unless specified otherwise. The first baseline uses channel funnels but merges funnel and weight matrices during training, hence mimicking behavior at inference time as shown in~\cref{fig:funnel_pipeline}.
We report in~\cref{tab:funnel_vs_svd_baseline} that keeping funnel and weight matrices separate at training performs equally well. The second baseline is applying funnels to convolutions in ResNet blocks instead of attention layers. While we obtain favourable FVD and even greater gain in TFLOPs (7.2 vs 8.6), it does not translate to the latency reduction we see with funnels on attention (3.40 vs 2.87 seconds). We hypothesize that the attention layers play a greater role in reducing latency on device than convolutions for this model. The last baselines employ the standard technique of truncated singular decomposition of individual layers~\cite{zhang2015acceleratingVD}. This decomposition breaks down a weight matrix of a linear layer $W \in \mathbb{R}^{\cout \times \cin}$ into a low-rank product of two matrices $W_1 \in \mathbb{R}^{rc  \times \cin}$ and $W_2 \in \mathbb{R}^{\cout \times rc}$ where $r$ is the rank reduction factor, $r < 1$, and $c = \min\left(\cin, \cout\right)$.
Note that reduction in the number of parameters and FLOPs is achieved only if $r < 0.5$, while the size of the feature map after these two matrices remain intact in this approach.
While truncated decomposition after finetuning performs well in terms of FVD for both $r=0.25$ and $r=0.5$, it is slower on device compared to channel funneling (\num{3.35} and \num{3.48} vs. \num{2.87} seconds respectively). This difference is attributed to memory transfer overhead from introducing additional layers as well as not decreasing the original feature size, emphasizing the benefit of funnels. 
 
\begin{table}[t]
\centering
\newcommand{\leftspace}{\phantom{a}}
\resizebox{0.99\columnwidth}{!}{
\def\arraystretch{1.2}%
\begin{tabular}{lccccc}
\toprule%
Width reduction method & $r$ & FVD $\downarrow$ & TFLOPs $\downarrow$ & Latency (ms) $\downarrow$ \\
\midrule
Original UNet                                  & - & 133  & 8.6  & 3630 \\
\rowcolor{gray!20}
\leftspace + Funnels                           & 0.5   & 132  & 8.0  & 2870     \\
\leftspace + Funnels (merge before finetune)   & 0.5   & 138  & 8.0  & 2870     \\
\leftspace + Funnels (convolutions)            & 0.5   & 139  & 7.2  & 3400     \\
\leftspace + Truncated singular decomposition  & 0.5   & 142  & 8.6  & 3482     \\
\leftspace + Truncated singular decomposition  & 0.25   & 130  & 8.0 & 3345     \\
\bottomrule
\end{tabular}
}
\vspace{-1ex}
\caption{
\topic{Comparison of model width reduction methods.}
We compare the proposed channel funneling (in grey) with finetuned low-rank approximation of individual attention layers with truncated singular decomposition. We additionally compare to Funnels applied to convolutions instead of attention. The reduction rate (referred to as fun-factor in case of funnels) is highlighted with $r$.
} 
\label{tab:funnel_vs_svd_baseline}
\vspace{-2ex}
\end{table}

\subsubsection{Temporal blocks pruning}
\begin{table}[t]
\centering
\resizebox{0.95\columnwidth}{!}{
\def\arraystretch{1.0}%
\newcommand{\leftspace}{}
\begin{tabularx}{\linewidth}{lcccc} 
\toprule
\leftspace Blocks pruned (\%) & FVD $\downarrow$ & TFLOPs $\downarrow$ & \makecell{Latency \\GPU (ms) $\downarrow$} \\
\midrule
\hline
\rowcolor{gray!20}
\multicolumn{3}{l}{\emph{Our method}} &\\
\leftspace 90 & 201 & 4.06 & 42 \\ %
\leftspace 80 & 245 & 4.35 & 44 \\ %
\leftspace 70 & 127 & 4.64 & 47 \\ %
\rowcolor{gray!20}
\multicolumn{3}{l}{\emph{$L_1$ regularization}} &\\
\leftspace 70 & 207 &  4.67 & 48\\ %
\leftspace 53 & 165 & 5.17 & 52 \\ %
\bottomrule
\end{tabularx}
}
\vspace{-1ex}
\caption{
    \topic[0mm]{Impact of temporal blocks pruning.}
    Our pruning outperforms the $L_1-$ regularization which does not have explicit control over the number of removed blocks. We use the checkpoint, optimized up to the temporal block pruning stage, as the starting point. 
}
\label{tab:temporal_pruning_ablation}
\vspace{-2ex}
\end{table}

As mentioned in~\cref{sec:temporal_adaptors}, in our experiments we found it possible to reduce up to 70\% of all temporal blocks in the UNet.
Notably, even with such a high pruning rate the quality is comparable to the original model: it achieves FVD of \num{127} and requires 14\% less FLOPs, while the original model has FVD of 139.
However, pruning even further seems far from straightforward.
FVD degrades to the values above 200 with a pruning rate of 80\%, and visual quality drops drastically.

Additionally, we compare our method with another pruning baseline. As described in~\cref{sec:temporal_adaptors}, the output of a spatiotemporal block in SVD is a linear combination of the spatial and temporal blocks $x_s$ and $x_t$, respectively, $\alpha x_s + (1-\alpha) x_t$, where $\alpha$ is a weight scalar. The baseline aims to minimize the influence of temporal blocks, by minimizing $1-\alpha$. This is done by adding an additional loss term during finetuning: $\mathcal{L} =  \lambda \sum\nolimits_i \left(1 - \alpha_i\right)$.
After training, blocks with the highest weight scalar $\alpha_i$ are pruned.
However, this method lacks explicit control over the desired pruning rate, as only weight hyperparameter $\lambda$ can be adjusted. While effective for small pruning rates, this approach did not allow us to remove as many blocks as our method, achieving only a 7\% reduction in FLOPs with acceptable FVD level, see~\cref{tab:temporal_pruning_ablation}.

\section{Conclusion}
\label{sec:conclusion}
This paper introduced the first mobile-optimized video diffusion model, addressing the high computational demands that have limited their use on mobile devices. By optimizing the spatio-temporal UNet from Stable Video Diffusion and employing novel pruning techniques, we significantly reduced memory and computational requirements. Our model, \methodname, achieves substantial efficiency improvements with minimal quality loss, making video diffusion technology feasible for mobile platforms.

\topic{Limitations.}
Despite the impressive acceleration achieved as a steppingstone towards video generation on phones, the output is currently limited to 14 frames at a low resolution of $256\times512$ pixels. The next step involves leveraging more efficient autoencoders to achieve higher spatial and temporal compression rates, enabling the generation of larger and longer videos at the same diffusion latent generation cost.

\topic{Potential negative impacts.}
Our work is a step towards making video generation technology accessible to a broader audience, allowing users to create content on their phones with fewer access controls and regulations compared to cloud-based solutions.

{\small
\bibliographystyle{ieeenat_fullname}
\bibliography{bibliography}
}

\clearpage
\appendix
\maketitlesupplementary
\begin{table}[t]
\newcommand{\leftspace}{}
\centering
\resizebox{.99\columnwidth}{!}{
\def\arraystretch{1.}%
\begin{tabular}{lccccc} 
\toprule
\leftspace \multirow{2}{*}{Model} %
& \multirow{2}{*}{NFE} %
& \multirow{2}{*}{FVD $\downarrow$} %
& \multirow{2}{*}{TFLOPs $\downarrow$} %
& \multicolumn{2}{c}{Latency (ms) $\downarrow$} \\ 
\cmidrule{5-6}
& &  &  & GPU & Phone \\
\midrule
\multicolumn{6}{l}{\cellcolor{gray!20}{\emph{Resolution} $\mathit{1024 \times 576}$}} \\ 
\leftspace SVD         & 50 & $149$   & 45.43 & 376 & OOM \\
\leftspace AnimateLCM$^*$  & 8 & $281$   & 45.43 & 376 & OOM \\
\leftspace UFOGen$^*$    & 1  & $1917$ & 45.43 & 376 & OOM \\
\leftspace LADD$^*$       & 1  & $1894$ & 45.43 & 376 & OOM \\
\leftspace \sfv{}$^*$        & 1  &  $181$ & 45.43 & 376 & OOM \\ 
\leftspace \methodnamehires (ours) & 1  & $184$ & 23.63 & 227 & OOM \\ 
\multicolumn{6}{l}{\cellcolor{gray!20}{\emph{Resolution} $\mathit{512 \times 256}$}} \\ 
\leftspace SVD         & 50 & 476   & 8.60 & 82 & OOM \\
\leftspace \methodname (ours) & 1 &   171 & 4.34 & 45 & 1780 \\
\bottomrule
\end{tabular}
}
\caption{
\topic[0mm]{Comparison with recent models.} The set of optimizations proposed in our paper, can also be applied to high-resolution generation. FLOPs and latency are provided for a single function evaluation with batch size of 1. For rows marked with asterisk$^*$ FVD measurements were taken from~\citet{zhang_svf_2024}, while performance metrics are based on our measurements for UNet used by SVD. 
For consistency with these results, FVD for SVD and our  models was measured on UCF-101 dataset at 7 frames per second. 
}
\label{tab:sota_fvd_ext}
\end{table}

\begin{table*}[t]
\newcommand{\leftspace}{\phantom{a}}
\centering
\def\arraystretch{1}%
\begin{tabular}{lrrrcrrrrc} %
\toprule
\multirow{2}{*}{Model} %
& \multirow{2}{*}{NFE} %
& \multicolumn{2}{c}{FVD $\downarrow$} %
& %
& \multicolumn{2}{c}{\jedi{} $\downarrow$} %
& \multirow{2}{*}{TFLOPs $\downarrow$} %
& \multicolumn{2}{c}{Latency (ms) $\downarrow$} \\ 
\cmidrule{3-4}\cmidrule{6-7}\cmidrule{9-10}
& & 25 FPS & 7 FPS & & 25 FPS & 7 FPS & & GPU & Phone \\
\midrule
\multicolumn{10}{l}{\cellcolor{gray!20}{\emph{Resolution} $\mathit{1024 \times 576}$}} \\ 
SVD         & 50 & 140 & $149$  & & 0.61 & 0.59 & 45.43 & 376 & OOM \\
\methodnamehires & 1 & 126 & 184 & & 0.96 & 1.75 & 23.63 & 227 & OOM \\
\multicolumn{10}{l}{\cellcolor{gray!20}{\emph{Resolution} $\mathit{512 \times 256}$}} \\ 
SVD          & 50  & 366 & 476 & & 1.05 & 1.14 & 8.60 & 82 & OOM \\
\leftspace + low-resolution finetuning & 50 & 194 & 196 & & 0.71 & 0.65 & 8.60 & 82 & OOM \\
\leftspace\leftspace + optimized cross-attention & 50 & 194 & 196 & & 0.71 & 0.65 & 8.24 & 76 & 3630 \\
\leftspace\leftspace\leftspace + adversarial finetuning & 1 &  133 & 168 & & 0.66 & 0.71 & 8.24 & $76$ & 3630 \\
\leftspace\leftspace\leftspace\leftspace + temporal multiscaling & 1  &  139 & 156 & & 0.83 & 0.81 & 5.42 & $59$ & 2590\\
\leftspace\leftspace\leftspace\leftspace\leftspace + temporal block pruning & 1  &  127 & 150 & & 0.97 & 1.32 & 4.64 & 47 &  2100 \\
\leftspace\leftspace\leftspace\leftspace\leftspace\leftspace + channel funneling & 1  &   149 & 171 & & 1.07 & 1.21 & 4.34 & 45 & 1780 \\
\bottomrule
\end{tabular}
\caption{
    \topic[0mm]{Effect of our optimizations.} 
    We successfully deployed the image-to-video model to a mobile device without significantly sacrificing the visual quality.
    FLOPs and latency are provided for a single function evaluation with batch size of 1.
    We call the model in the bottom row Mobile Video Diffusion, or \methodname.
    The model trained with the same hyperparameters but intended for high-resolution generations is referred to as \methodnamehires.
}
\label{tab:optimizations_fvd_ext}
\end{table*}

\section{Additional results}

\subsection{High-resolution generation}
In addition to our main effort to port the SVD model on a mobile phone, we tested if our set of optimizations can be applied to a high-resolution model.
For this purpose, we trained a model called \methodnamehires which is capable of generating 14-frame videos with spatial size of $1024 \times 576$ px.
Architecture hyperparameters used to finetune \methodnamehires, \ie temporal multiscaling factor, fun-factor and number of pruned temporal blocks, were the same as for our low-resolution \methodname.
We report the results of that model in~\cref{tab:sota_fvd_ext}.
As shown, it achieves visual quality on par with \sfv while decreasing its GPU latency by \num{40}\%.

\subsection{Visual quality metrics}
Although FVD is a widely used metric for visual quality in the community, its ability to assess the faithfulness of motion as opposed to the appearance of individual frames is sometimes argued~\cite{Ge_2024_CVPR,brooks2022generating,Skorokhodov_2022_CVPR}.
For that reason, we also evaluate the quality of different models using a recently proposed \jedi (JEPA Embedding Distance) metric~\cite[v.0.1.3]{luo2024jedi}.
\jedi reportedly demonstrates much better correlation with the human preference than FVD.
Since \citet{luo2024jedi} have not recommended guidelines for comparison between generative models using their metric, we opted for the setup similar to FVD computation~\cite{blattmann_stable_2023,zhang_svf_2024}.
In detail, we used the same set of clips from UCF-101, and similarly downsampled the videos to the resolution closest to $320 \times 240$, preserving the original aspect ratio, with the central crop afterwards. 
In~\cref{tab:optimizations_fvd_ext} we provide the extended quantitative results.
We observe that in general the new metric is better aligned with architecture optimizations that we apply. 

\section{Additional details}
\label{supp:additional_details}
\subsection{Training}
For diffusion training, we used AdamW optimizer~\cite{loshchilov2018decoupled} with learning rate of \num{1e-6} and weight decay \num{1e-3}, while other hyperparameters were default.
During adversarial finetuning, we also used AdamW.
For generator the learning rate was equal to \num{1.25e-6}, and for discriminator we set it 10 times higher.
For logits of importance values, used for pruning of temporal blocks, learning rate was equal to \num{1e-3}.
Momentum weights for both optimizers we set as follows $\beta_1=0.5, \, \beta_2=0.999.$
For generator, the weights for adversarial and pseudo-Huber loss were equal to 1 and 0.1 respectively. 
For discriminator, weight of $R_1$ penalty was \num{1e-6}, and we applied it once per 5 iterations, as recommended in~\cite{Karras_2020_CVPR}.

For high-resolution training, the first (diffusion) stage lasted for 10k iterations with total batch size of 4.
The second (adversarial) stage comprised 30k iterations with batch size of 2. The learning rates for adversarial finetuning were twice as lower as for low-resolution case, except for the learning rate of importance values. 
$R_1$ penalty was applied at each step. 
Other training aspects were the same both for \methodname and \methodnamehires.

\subsection{Decoding latents}
As our approach stems from SVD, the presented \methodname is also a latent model.
Therefore, each generated latent code must be decoded to raw RGB pixels.
For all the experiments reported in the main text, we used the native autoencoder weights released alongside the SVD model itself.
The decoding was conducted independently for each frame.
Notably, this model is relatively slow on device: it takes \num{91.5} ms per frame, resulting in \num{1280} ms for decoding the full generated video.
This timing is comparable with the latency of \methodname (\num{1780} ms). 
As an alternative, we used the decoder from TAESD autoencoder\footnote{\url{https://huggingface.co/madebyollin/taesd/tree/614f768}}.
It is significantly faster on a smartphone: \num{6.4} ms per frame, or \num{90} ms for the full video.
At the same time, the difference in quality metrics is negligible, see~\cref{tab:tiny_ae}.

\begin{table}[t]
\newcommand{\leftspace}{}
\centering
\resizebox{0.9\columnwidth}{!}{
\begin{tabular}{lrrcrrr} 
\toprule
\multirow{2}{*}{Decoder} 
& \multicolumn{2}{c}{FVD $\downarrow$} 
&
& \multicolumn{2}{c}{\jedi{} $\downarrow$} 
& \multirow{2}{*}{\makecell{Latency \\(ms) $\downarrow$}}\\
\cmidrule{2-3}\cmidrule{5-6}
& 25 FPS & 7 FPS & & 25 FPS & 7 FPS &\\
\midrule
Original decoder & 149 & 171 & & 1.07 & 1.21 & 1280\\
TAESD decoder & 149 & 179 & & 1.05 & 1.21 & 90\\
\bottomrule
\end{tabular}
}
\caption{
    \topic[0mm]{Impact of latent decoder.}
    While being significantly faster on device, decoder from TAESD has little to no impact on visual quality as measured by FVD and \jedi.
}
\label{tab:tiny_ae}
\vspace{-0.7em}
\end{table}

\subsection{Channel funnels}
\topic{Attention layers.}
Consider a query and key projection matrices in a self-attention similarity map computation, $X W_q \left( X W_k \right)^T$ with layer input $X$ having a shape of $L \times \cin$, and $W_q$ and $W_k$ of $\cin \times \cinner$.
With funnel matrices $F_q$ and $F_k$ of size $\cinner \times c'$, we modify the aforementioned bilinear map as $X W_q F_q \left( X W_k F_k \right)^T = X W_q F_q F_k^T W_k^T X^T$. 
We apply our coupled singular initialization (CSI) by setting  $F_q = W_q^\dagger U_{c'} \Sigma_{c'}^{1/2}$ and $F_k = W_k^\dagger V_{c'} \Sigma_{c'}^{1/2}$.
For value and output projections matrices the initialization is applied in the way discussed in the main text.

\topic{Convolutional layers.}
Applying the same  initialization to a pair of convolutional layers is not straightforward. 
Weight tensors of 2D convolutional layers are 4-dimensional, and therefore computation of the effective weight tensor is not obvious. 
However, we make use of the following observation.
Consider input tensor $X$ with shape $h \times w \times \cin$. 
We refer to its 2-dimensional pixel coordinate as $\vec{p}$, and therefore $X_{\vec{p}}$ is a vector with $\cin$ entries.
Let $W$ be a convolutional kernel with size $k_h \times k_w \times \cout \times \cin$, and we refer to its spatial 2-dimensional coordinate as $\vec{q}$, while $\vec{q} = 0$ is a center of a convolutional filter.
For $j$-th output channel, $W_{\vec{q},j}$ is also a vector with $\cin$ entries.
The layer output $Y \in \mathbb{R}^{h \times w \times \cout}$ can be computed as 
\begin{equation}
    Y_{\vec{p},j} = \sum\nolimits_{\vec{q}} \left\langle W_{\vec{q},j}, \, X_{\vec{p} + \vec{q}} \right\rangle,
    \label{eq:conv_input_patch}
\end{equation}
where $ \left\langle \cdot , \, \cdot \right\rangle$ denotes inner product.
Simply speaking, this means that convolution can be treated as a linear layer with weight matrix of shape $\cout \times \left( k_h \cdot k_w \cdot \cin \right)$ applied to each flattened input patch.

At the same time, another way of reshaping the kernel is also possible.
Consider a tensor $E$ of shape $k_h \times k_w \times \cout \times h \times w$ defined as
\begin{equation}
    E_{\vec{q},j,\vec{p}} = \left\langle W_{\vec{q},j}, \, X_{\vec{p}} \right\rangle.
    \label{eq:conv_output_collection}
\end{equation}
In other words, the convolution kernel reshaped as $\left( k_h \cdot k_w \cdot \cout \right) \times \cin$ is multiplied by features of each input pixel.
Then \cref{eq:conv_input_patch} can be rewritten as 
\begin{equation}
    Y_{\vec{p},j} 
    = \sum\nolimits_{\vec{q}} E_{\vec{q},j,\vec{p}+\vec{q}}
    = \sum\nolimits_{\vec{q}} \left\langle E_{\vec{q},j}, \, \delta_{\vec{p}+\vec{q}} \right\rangle,
    \label{eq:conv_reduction}
\end{equation}
where $\delta$ is a 4-dimensional identity tensor, \ie $\delta_{\vec{u},\vec{v}} = 1$ if $\vec{u} =\vec{v}$ and 0 otherwise.

Having said that, a sequence of two convolutions can be presented as 
\begin{enumerate*}[label=(\roman*)]
\item flattening the input patches; 
\item matrix multiplication by the first kernel reshaped according to~\cref{eq:conv_input_patch}; followed by 
\label{listitem:1}
\item matrix multiplication by the second kernel reshaped as in~\cref{eq:conv_output_collection}; and with 
\label{listitem:2}
\item independent of kernels operation described by~\cref{eq:conv_reduction} afterwards.
\end{enumerate*}
Therefore, coupled singular initialization can be applied to the product of matrices used in steps \ref*{listitem:1} and \ref*{listitem:2}.
We follow this approach and introduce funnels to the pairs of convolutions within the same ResNet block of denoising UNet.

\subsection{Pruning of temporal adaptors}
\label{supp:temporal_pruning}
\topic{Practical considerations.}
In the main text we described that we transform the update rule of the temporal block as follows $x_s + \hat{z} \left(1 - \alpha\right) r_t$, where $x_s$ and $r_t$ are  outputs of the spatial and temporal layers respectively, $\alpha$ is the learnable weight, and $\hat{z}$  is a zero-one gate multiplier.
Note that if the temporal block is pruned, \ie $\hat{z}_i  = 0$, then the gradient of the loss function  \wrt the temporal block's learnable parameters equals zero.  This affects the gradient momentum buffers used by optimizers.  For that reason, we do no update the momentum of the temporal block's parameters in case it has been pruned by all the devices at current iteration of multi-GPU training.

In the network, we parametrize the importance values $q_i$ with the sigmoid function with fixed temperature value of 0.1.
In the same way weight coefficients $\alpha_i$ were reparametrized.
For faster convergence, each value $q_i$ is initialized with the weight of the corresponding temporal block, \ie $1 - \alpha_i$.
We also found necessary to set the learning rate for the logits of importance values $q_i$ significantly higher than for the other parameters of the denoising UNet.

\topic{Constrained optimization.}
As discussed in the main text, we relate the importance values of temporal blocks $\left\lbrace q_i \right\rbrace_{i=1}^N$ to their inclusion probabilities for sampling without replacement $\left\lbrace p_i \right\rbrace_{i=1}^N$ by solving the following constrained optimization problem:
\begin{equation}
\begin{aligned}
    \min\limits_{c, \left\{p_i\right\}_i} \quad & \sum\nolimits_i \left(p_i - c q_i\right)^2,
    \\ \textrm{s.t.} \quad & \sum\nolimits_i p_i = n,
    \\   & 0 \le p_i \le 1.
\end{aligned}
\end{equation}
To solve it, we employ the common method of Lagrange multipliers assuming that all $q$-values are  strictly positive.
\Wlog we consider the case of sorted values $\left\{q_i\right\}_i$, \ie $q_1 \geq q_2 \geq \dots \geq q_N > 0$.
In detail, we define a Lagrangian 
\begin{equation}
\begin{aligned}
    \lhsisolated L\lft(c, \left\{p_i\right\}_i, \lambda, \beta, \left\{\gamma_i\right\}_i, \left\{\delta_i\right\}_i\rgt)  \\
      &= \lambda \sum\nolimits_i \left(p_i - c q_i\right)^2  \\
    &\quad +  \beta \left(\sum\nolimits_i p_i - n \right) \\
    &\quad +  \sum\nolimits_i \gamma_i \left(-p_i\right) \\
    &\quad +  \sum_i \delta_i \left(p_i - 1\right)
\end{aligned}
\end{equation}

and aim to solve the following system of equalities and inequalities
\begin{align}
    & \frac{\partial L}{\partial p_i} = 2 \lambda \left(p_i - c q_i\right) + \beta -\gamma_i + \delta_i = 0 \quad \forall i, \\
    & \frac{\partial L}{\partial c} = 2 \lambda \sum\nolimits_i \left(c q_i - p_i \right) q_i = 0, \\
    & \sum\nolimits_i p_i = n, \\
    & \gamma_i p_i = 0 \quad \forall i, \\
    & \delta_i \left(p_i - 1\right) = 0 \quad \forall i, \\
    & \gamma_i, \, \delta_i  \geq 0 \quad \forall i, \\
    & \lambda^2 + \beta^2 + \sum\nolimits_i \left(\gamma_i^2 + \delta_i^2\right) > 0.
\end{align}

\underline{Case $\lambda  = 0.$ }
In this case $\forall i \; \gamma_i  - \delta_i = \beta = \textrm{const}.$ If $\beta > 0,$ then $\forall i \; \gamma_i > \delta_i \geq 0 \Rightarrow \gamma_i > 0 \Rightarrow p_i = 0,$ which leads to a contradiction. Cases $\beta < 0$ and $\beta = 0$ also trivially lead to contradictions.

\underline{Case $\lambda  = 1.$ }
First, we derive that $c \sum\nolimits_i q_i^2 = \sum\nolimits_i p_i q_i,$ and thus
\begin{equation}
    c = \frac{\sum\nolimits_i p_i q_i}{\sum\nolimits_i q_i^2} > 0.
\end{equation}
Since $\forall i \; \frac{\partial L}{\partial p_i} = 0,$ then $\sum\nolimits_i q_i \frac{\partial L}{\partial p_i} = 0$.
\begin{align}
    \lhsisolated \sum\nolimits_i q_i \frac{\partial L}{\partial p_i} \\
    &= 2 \sum\nolimits_i q_i \left(p_i - c q_i \right) 
     + \beta \sum\nolimits_i q_i 
     - \sum\nolimits_i q_i \left(\gamma_i - \delta_i\right) \\ 
    &= \beta \sum\nolimits_i q_i 
     - \sum\nolimits_i q_i \left(\gamma_i - \delta_i\right) \\ 
    & = 0,
\end{align}
and therefore, 
\begin{equation}
    \beta = \frac{\sum\nolimits_i q_i \left(\gamma_i - \delta_i\right)}{\sum\nolimits_i q_i}.
    \label{eq:lagrange_beta_expression}
\end{equation}
\begin{lemma}
    For all indices $i$ it holds true that $\gamma_i = 0$.
\end{lemma}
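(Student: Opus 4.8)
The plan is to argue by contradiction: assume $\gamma_k>0$ for some index $k$ and derive an inconsistency among the stationarity and complementary-slackness conditions listed above. First I would use $\gamma_k p_k = 0$ to conclude $p_k = 0$, and then $\delta_k(p_k-1)=0$ to conclude $\delta_k=0$. Plugging these into the stationarity equation $\partial L/\partial p_k = 0$ gives $\beta = 2 c q_k + \gamma_k$; since in the case $\lambda=1$ we already know $c>0$, and $q_k>0$ by the standing assumption, the hypothesis $\gamma_k>0$ forces $\beta>0$. The strict positivity of $\beta$ is the only consequence of $\gamma_k>0$ that the rest of the argument needs.

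\textbf{Main step.} With $\beta>0$ available, I would show that every residual $r_i := p_i - c q_i$ is strictly negative, splitting on where $p_i$ lies in $[0,1]$. If $0<p_i<1$, complementary slackness gives $\gamma_i=\delta_i=0$, so $\partial L/\partial p_i=0$ reads $2r_i+\beta=0$, \ie $r_i=-\beta/2<0$. If $p_i=1$, then $\gamma_i=0$ and $\partial L/\partial p_i=0$ gives $\delta_i = -2r_i-\beta\ge 0$, hence $r_i\le -\beta/2<0$. If $p_i=0$, then $r_i=-c q_i<0$ directly. Since every $q_i>0$, summing yields $\sum\nolimits_i q_i r_i < 0$. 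But the remaining stationarity condition $\partial L/\partial c=0$ is exactly $\sum\nolimits_i q_i(c q_i - p_i)=0$, \ie $\sum\nolimits_i q_i r_i = 0$ — a contradiction. Therefore no $\gamma_k$ can be positive, and combined with the feasibility constraint $\gamma_i\ge 0$ this gives $\gamma_i=0$ for all $i$.

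\textbf{Where the difficulty lies.} The only delicate point is the saturated case $p_i=1$, where stationarity delivers merely the inequality $r_i\le -\beta/2$; strictness of the final inequality $\sum_i q_i r_i<0$ then relies on $\beta$ being \emph{strictly} positive rather than just nonnegative, which is precisely the leverage the assumption $\gamma_k>0$ provides. Everything else is routine rearrangement of the KKT system already displayed; in particular, the sorted ordering $q_1\ge\dots\ge q_N$ is not needed for this lemma and will only be used afterwards when extracting the closed form for the $p_i$.
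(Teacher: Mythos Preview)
Your proof is correct and is in fact cleaner than the paper's. Both arguments open identically: assume $\gamma_k>0$, use complementary slackness to get $p_k=0$ and $\delta_k=0$, and read off $\beta=2cq_k+\gamma_k>0$ from stationarity in $p_k$. From there the two diverge.

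The paper exploits the sorted ordering $q_1\ge\cdots\ge q_N$ to show that $\gamma_k>0$ forces $\gamma_j>0$ for every $j>k$, so the positive-$\gamma$ indices form a tail $\{s+1,\dots,N\}$ with $p_i=0$ there. It then plugs this structure into the identity $\beta=\bigl(\sum_i q_i(\gamma_i-\delta_i)\bigr)\big/\sum_i q_i$ (itself derived by combining $\sum_i q_i\,\partial L/\partial p_i=0$ with $\partial L/\partial c=0$), rearranges, and reaches a sign contradiction between a strictly positive LHS and a non-positive RHS.

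Your route bypasses all of that: once $\beta>0$, a three-case check on the value of $p_i$ shows every residual $r_i=p_i-cq_i$ is strictly negative, and this immediately contradicts $\partial L/\partial c=0$, which says $\sum_i q_i r_i=0$. This is shorter, avoids the combinatorial ``contagion'' step, and --- as you observe --- does not use the sorted ordering at all. The paper's longer argument does not buy anything extra for this lemma (the threshold index $s$ it introduces is discarded once the contradiction lands); the analogous threshold structure for the $\delta_i$'s is established separately afterwards.
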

\begin{proof}
Proof is given by contradiction.
Let us assume that $\exists k \; \gamma_k > 0 \Rightarrow p_k = 0 \Rightarrow \delta_k = 0 \Rightarrow -2c q_k + \beta - \gamma_k = 0 \Rightarrow \beta = 2c q_k + \gamma_k > 0$.
Then for any index $j$ such that $j > k$ and, consequently, $q_j \leq q_k$, the following equality holds true:
\begin{align}
    \lhsisolated 2\left(p_j - c q_j\right) + \beta - \gamma_j + \delta_j \\
    &= 2\left(p_j - c q_j\right) + 2c q_k + \gamma_k - \gamma_j + \delta_j \\
    &= 2 p_j + 2 c \left(q_k - q_j \right) + \left(\gamma_k - \gamma_j \right) + \delta_j \\
    &= 0.
\end{align}
All the terms in the last sum are known to be non-negative except for $\gamma_k - \gamma_j$. Therefore, $\gamma_k \leq \gamma_j \Rightarrow \gamma_j > 0 \Rightarrow p_j = 0$.
We define index $s$ as the largest index for which $\gamma_s = 0$, \ie
$\gamma_1 = \dots = \gamma_s = 0, \; \gamma_{s+1} > 0$. Note that $s > n$, since otherwise the equality $\sum\nolimits_i p_i = n$ cannot be satisfied.
Also, $p_j = 0$ for $j > s$.
Now we can rewrite \cref{eq:lagrange_beta_expression} as follows
\begin{align}
    \beta 
    &= \frac{1}{\sum\nolimits_i q_i} \left( - \sum\limits_{i:\,i \leq s} q_i \delta_i + \sum\limits_{i:\, i>s} q_i \gamma_i \right) \\
    &= \frac{1}{\sum\nolimits_i q_i} \left( - \sum\limits_{i:\,i \leq s} q_i \delta_i + \sum\limits_{i:\, i>s} q_i \left(\beta - 2 c q_i\right)\right) \\
    &= \frac{1}{\sum\nolimits_i q_i} \left( - \sum\limits_{i:\,i \leq s} q_i \delta_i -2c \sum\limits_{i:\, i>s} q_i^2\right) + \frac{\beta \sum\limits_{i:\, i>s} q_i}{\sum\nolimits_i q_i}.
\end{align}
After moving the last term from RHS to LHS, we obtain
\begin{equation}
    \frac{\beta \sum\limits_{i:\, i \leq s} q_i}{\sum\nolimits_i q_i} 
    =  \frac{1}{\sum\nolimits_i q_i} \left( - \sum\limits_{i:\,i \leq s} q_i \delta_i -2c \sum\limits_{i:\, i>s} q_i^2\right).
\end{equation}
Note that LHS is obviously strictly positive, while RHS is non-positive.
\end{proof}

Since $\forall i \; \gamma_i = 0$, we rewrite \cref{eq:lagrange_beta_expression},
\begin{equation}
    \beta = - \frac{\sum\nolimits_i q_i \delta_i }{\sum\nolimits_i q_i}.
\end{equation}
If $\forall i \; \delta_i = 0,$  then it is also true that $\beta = 0$, leading to $\forall i \; p_i = c q_i$. This is the case when inclusion probabilities are \emph{exactly} proportional to the importance values.
However, this is possible if and only if the maximum value $q_1$ is not too large in comparison with other values, since otherwise $p_1 > 1$.

In this last case $\exists k \; \delta_k > 0 \Rightarrow p_k = 1 \Rightarrow 2 \left(1 - c q_k \right) + \beta + \delta_k = 0 \Rightarrow \beta = 2 \left(c q_k - 1\right) - \delta_k. $
For any index $j$ such that $j < k$ we have
\begin{align}
    \lhsisolated 2\left(p_j - c q_j\right) + \beta  + \delta_j \\
    &= 2\left(p_j - c q_j\right) + 2 \left(c q_k - 1\right) - \delta_k  + \delta_j \\
    &= 2 p_j + 2 c \left(q_k - q_j \right) + \left(\delta_j - \delta_k \right) -2 \\
    &= 0.
\end{align}
By regrouping the terms, we obtain 
\begin{align}
    2 p_j + \delta_j  = 2c \left(q_j - q_k\right) + \delta_k + 2 > 2,
\end{align}
and since $p_j \leq 1$, this means that $\delta_j > 0 \Rightarrow p_j=1.$
Therefore, if for some index $k$ it turns out that $\delta_k > 0$, then for all smaller indices $j$, $\delta_j > 0$, and consequently $p_j=1$.
Again, let us define the index $t$ as the least index with zero $\delta$ coefficient, $\delta_{t-1} >0, \, \delta_t=\delta_{t+1}=\dots=0.$
Note that $t \leq n+1$, since more that $n$ inclusion probabilities cannot be equal to 1.

For $i \geq t, \; 2\left(p_i - c q_i\right) + \beta = 0 \Rightarrow p_i = c q_i - \frac{\beta}{2}.$

Therefore, 
\begin{equation}
    \sum\limits_i p_i = \sum\limits_{i:\, i<t} p_i + \sum\limits_{i:\, i \geq t} p_i = t-1 + \sum\limits_{i:\, i \geq t} \left(c q_i - \frac{\beta}{2}\right)  = n.
\end{equation}
Similarly,
\begin{equation}
    c \sum\limits_i q_i^2 = \sum\limits_i p_i q_i = \sum\limits_{i:\, i<t} q_i + \sum\limits_{i:\, i \geq t} q_i \left(c q_i - \frac{\beta}{2}\right)
\end{equation}

For any given $t=2,\dots,n$ two last equations allow us to compute the values of $c$ and $\beta$.
\begin{equation}
    \begin{pmatrix}
        \sum\limits_{i:\, i \geq t}q_i & -\left(N-t+1\right)
        \\ \\
        \sum\limits_{i:\, i<t}q_i^2 & \sum\limits_{i:\, i \geq t}q_i 
    \end{pmatrix}
    \cdot
    \begin{pmatrix}
        c \\ \\ \frac{\beta}{2}
    \end{pmatrix}
    = \begin{pmatrix}
        n - t + 1
        \\  \\ 
        \sum\limits_{i:\, i<t}q_i 
    \end{pmatrix},
\end{equation}
where $N$ is the total number of important values.
The solution exists and is unique for each $t$ since, obviously, \begin{equation*}
    \det \begin{pmatrix}
        \sum\limits_{i:\, i \geq t}q_i & -\left(N-t+1\right)
        \\ \\
        \sum\limits_{i:\, i<t}q_i^2 & \sum\limits_{i:\, i \geq t}q_i 
    \end{pmatrix} > 0.
\end{equation*}
In practice, we solve this matrix equation for each $2 \leq t \leq n$, test if the solution satisfies all the constraints, and after that select the solution that delivers the minimum value of our objective function.
At least one proper solution always exists, since for $t=n+1$ inclusion probabilities are equal to 1 for $n$ largest importance values and equal to 0 for all the rest indices. 

The solution of the system is differentiable \wrt all the $q_i$, leading to differentiable probabilities $p_i$.
However, as mentioned earlier, we use only $p_i$ computed with these equations for $i \geq t$, while for $i < t$ we set $p_i=1$. Therefore, we employ a straight-through estimator for these indices~\cite{bengio2013estimatingpropagatinggradientsstochastic}.

\end{document}